\definecolor{blued}{RGB}{70,197,221}
\newtheorem{proposition}{Proposition}
\newtheorem{definition}{Definition}
\newcommand{\nrm}[1]{\left\|#1\right\|}
\newcommand{\algname}{NOHD}
\DeclareRobustCommand{\quotes}[1]{``#1''}
\DeclareMathOperator{\EX}{\mathbb{E}}% expected value
\DeclareMathOperator{\Ham}{\mathcal{H}}% hamiltonian
\DeclareMathOperator{\Hes}{\mathrm{J}}% hessian
\DeclareMathOperator{\Real}{\mathbb{R}}
\newcommand{\norm}[1]{\left\|#1\right\|}
\newcommand{\vtheta}{\bm{\theta}}
\newtheorem{exmp}{Example}[section]
\declaretheorem[name=Remark]{remark}
\title{Newton Optimization on Helmholtz Decomposition for Continuous Games}
\author{
    %Authors
    % All authors must be in the same font size and format.
    Giorgia Ramponi, Marcello Restelli \\
}
\begin{document}

\maketitle

%\the\textwidth; \the\textheight

\begin{abstract}
Many learning problems involve multiple agents that optimize different interactive functions. In these problems, standard policy gradient algorithms fail due to the non-stationarity of the setting and the different interests of each agent. In fact, the learning algorithms must consider the complex dynamics of these systems to guarantee rapid convergence towards a (local) Nash equilibrium. In this paper, we propose \algname~ (Newton Optimization on Helmholtz Decomposition), a Newton-like algorithm for multi-agent learning problems based on the decomposition of the system dynamics into its irrotational (Potential) and solenoidal (Hamiltonian) components. This method ensures quadratic convergence in purely irrotational systems and pure solenoidal systems. Furthermore, we show that \algname~ is attracted to symmetric stable fixed points in general multi-agent systems and repelled by strict saddle ones. Finally, we empirically compare the \algname's~ performance with state-of-the-art algorithms on some bimatrix games and in a continuous Gridworld environment.
\end{abstract}

\section{Introduction}
In recent years, Reinforcement Learning (RL)~\cite{sutton} methods with multiple agents~\cite{bucsoniu2010multi,zhang2019multi} have made substantial progress in solving decision-making problems such as playing Go~\cite{go}, robotic control problems \cite{lillicrap2015continuous}, playing card games~\cite{brown2019superhuman} and autonoumous driving~\cite{shalev2016safe}.
Furthermore, in other machine learning fields, powerful algorithms that optimize multiple losses have recently been proposed. Generative Adversarial Networks (GANs)~\cite{goodfellow2014generative} is an example, which achieves successful results in Computer Vision~\citep{isola2017image, ledig2017photo} and Natural Language Generation~\citep{nie2018relgan, yu2017seqgan}. On the other hand, thanks to their ability to learn in the stochastic policy space and their effectiveness in solving high-dimensional, continuous state and action problems, policy-gradient algorithms~\citep{peters2006policy} are natural candidates for use in multi-agent learning problems.
Nonetheless, multiple policy-gradient agents' interaction has proven unsuccessful in learning a set of policies that converges to a (local) Nash Equilibrium~\citep{mertikopoulos2018cycles, papadimitriou2016nash}. More than one agent leads to the failure of standard optimization methods in most games due to the non-stationarity of the environment and the lack of cooperation between the agents. \\
How to optimize multiple policy-gradient agents is a problem of theoretical interest and practical importance. Over the past two years, a growing number of papers has addressed this problem by focusing on \textit{continuous (differentiable) games}, i.e., games where the agent's objective functions are twice differentiable with respect to the policy parameters \citep{mazumdar2020gradient}. Some of them only consider the competitive setting due to the success of GANs~\citep{mescheder2017numerics}. The general case was considered only recently, where the gradient descent update rule was combined with second-order terms~\citep{balduzzi2018mechanics, letcher2018stable, foerster2018learning, schafer2019competitive}. Some of them guarantee linear convergence under specific assumptions.\\
In this paper, we study how to build a Newton-based algorithm (\algname) for learning policies in multi-agent environments. First of all, we start by analyzing two specific game classes: Potential Games and Hamiltonian Games (Section~\ref{s:preliminaries}). In Section~\ref{s:newton_for_games}, we propose a Newton-based update for these two classes of games, for which linear-rate algorithms that guarantee convergence are known, proving quadratic convergence rates. Then, we extend the algorithm to the general case, neither Hamiltonian nor Potential. We show that the proposed algorithm respects some desiderata, similar to those proposed in~\cite{balduzzi2018mechanics}: the algorithm has to guarantee convergence to (local) Nash Equilibria in (D1) Potential and (D2) Hamiltonian games; (D3) the algorithm has to be attracted by symmetric stable fixed points and (D4) repelled by symmetric unstable fixed points. Finally, in Section~\ref{s:experiments}, we analyze the empirical performance of \algname~when agents optimize a Boltzmann policy in three bimatrix games: Prisoner's Dilemma, Matching Pennies, and Rock-Paper-Scissors.
%\algname~ is compared with the state of the art algorithms.%Consensus Optimization (CO) \citep{mescheder2017numerics}, Stable Opponent Shaping (SOS) \citep{letcher2018stable}, Learning with Opponent-Learning Awareness (LOLA) \citep{foerster2018learning}, Competitive Gradient Ascent (CGD) \cite{schafer2019competitive} and Iterated Gradient Ascent Policy Prediction (IGA-PP) \cite{zhang2010multi}. 
In the last experiment, we study the learning performance of \algname~ in two continuous gridworld environments. In all experiments, \algname~achieves great results confirming the quadratic nature of the update. 
The proofs of all the results presented in the paper are reported
in Appendix
 \footnote{A complete version of the paper, which includes the appendix,
is available at https://arxiv.org/pdf/2007.07804.pdf.}.

\section{Related Works}\label{s:related_works}
The study of convergence in classic convex multiplayer games has been extensively studied and analyzed~\citep{rosen1965existence, facchinei2007generalized, singh2000nash}. Unfortunately, the same algorithms cannot be used with neural networks due to the non-convexity of the objective functions. 
Various algorithms have been proposed that successfully guarantee convergence in specific classes of games: policy prediction in two-player two-action bimatrix games~\citep{zhang2010multi, song2019convergence}; WoLF in two-player two-action games~\cite{bowling2002multiagent}; AWESOME in repeated games~\cite{conitzer2007awesome}; Optimistic Mirror Descent in two-player bilinear zero-sum games~\citep{daskalakis2017training}; Consensus Optimization~\citep{mescheder2017numerics}, Competitive Gradient Descent~\citep{schafer2019competitive} and \citep{mazumdar2019finding} in two-player zero-sum games. Other recent works propose methods to learn in smooth-markets \cite{balduzzi2020smooth}, sequential imperfect information games \cite{perolat2020poincar}, zero-sum linear-quadratic games \cite{zhang2019policy}. Furthermore, the dynamics of Stackelberg games was studied by~\citet{fiez2019convergence}. Other insights on the convergence of gradient-based learning was proposed in \cite{chasnov2020convergence}. Many algorithms have also been proposed specifically for GANs~\citep{heusel2017gans, nagarajan2017gradient, gemp2018global}.\\ 
Some recent works have developed learning algorithms also for general games. The first example is the Iterated Gradient Ascent Policy Prediction (IGA-PP) algorithm, renamed LookAhead~\citep{zhang2010multi}. In~\cite{letcher2018stable} it is proved that IGA-PP converges to local Nash Equilibria not only in two-player two-action bimatrix games but also in general games. Learning with opponent learning awareness (LOLA)~\citep{foerster2018learning} is an attempt to use the other agents' functions to account for the impact of one agent's policy on the anticipated parameter update of the other agents. The empirical results show the effectiveness of LOLA, but no convergence guarantees are provided. Indeed, \citet{letcher2018stable} have shown that LOLA may converge to non-fixed points and proposed Stable Opponent Shaping, an algorithm that maintains the theoretical convergence guarantees of IGA-PP, also exploiting the opponent dynamics like LOLA.  In~\cite{balduzzi2018mechanics, letcher2019differentiable} the authors studied game dynamics by decomposing a game into its Potential and Hamiltonian components using the generalized Helmholtz decomposition. The authors propose Symplectic Gradient Adjustment (SGA), an algorithm for general games, which converges locally to stable fixed points, using the Hamiltonian part to adjust the gradient update. Instead, our algorithm uses information about the Potential component and the Hamiltonian component of the game to approximate the general game with one of these parts at each step.

\section{Preliminaries}\label{s:preliminaries}
We cast the multi-agent learning problem as a \textit{Continuous Stochastic Game}. We adapt the concept of Stochastic Game and Continuous (Differentiable) Game from \cite{foerster2018learning, balduzzi2018mechanics, ratliff2016characterization}. 
%Although we analyze stochastic games (games where the parameters are related to action-state probabilities and function approximations), the proposed method can also be used for general Differentiable Games \cite{balduzzi2018mechanics} 
In this section, after introducing Continuous Stochastic Games, we recall the game decomposition proposed by  \citet{balduzzi2018mechanics}. Then, we describe the desired convergence points. In the end, we introduce Newton's method. 

\textbf{Continuous Stochastic Games} 
A continuous stochastic game~\citep{foerster2018learning, balduzzi2018mechanics} is a tuple $\mathcal{G} =$ $(X, U_1, \dots$ $,  U_n, f, C_1, \dots, C_n,\gamma_1,\dots,\gamma_n)$ where $n$ is the number of agents; $X$ is the set of states; $U_i$, $1 \le i \le n$, is the set of actions of agent $i$ and $U = U_1 \times \dots \times U_n$ is the joint action set; $f: X \times U \rightarrow \Delta(X)$ is the state transition probability function (where $\Delta(\Omega)$ denotes the set of probability measures over a generic set $\Omega$), $C_i: X \times U \rightarrow \mathbb{R}$ is the cost function of agent $i$, and $\gamma_i\in[0,1)$ is its discount factor. \footnote{Given a vector $\mathbf{v}$ and a matrix $H$, in the following we will denote by $\mathbf{v}^T$ the transpose of $\mathbf{v}$ and with $\nrm{\mathbf{v}}$ and $\nrm{H}$ the respective L2-norms.}
%Given a vector $\mathbf{v}$ and a matrix $H$, in the following we will denote by $\mathbf{v}^T$ the transpose of $\mathbf{v}$ and with $\nrm{\mathbf{v}}$ and $\nrm{H}$ the respective L2-norms. 
The agent's behavior is described by means of a parametric twice differentiable policy $\pi_{\vtheta_i}: X \rightarrow \Delta(U_i)$, where $\vtheta_i \in \Theta \subseteq \mathbb{R}^d$ and $\pi_{\vtheta_i}(\cdot | x)$ specifies for each state $x$ a distribution over the action space $U_i$.\footnote{To ease the notation, we will drop $\vtheta$ (e.g., $\pi_i$ instead of $\pi_{\vtheta_i}$) when not necessary.} We denote by $\vtheta$ the vector of length $nd$ obtained by stacking together the parameters of all the agents: $\vtheta=(\vtheta_1^T,\dots,\vtheta_n^T)^T$.
We denote an infinite-horizon trajectory in game $\mathcal{G}$ by $\tau=\{x_t,\mathbf{u}_t\}_{t=0}^{\infty} \in \mathbb{T}$. where $\mathbf{u}_t = (u_1(t),\dots,u_n(t))$ is the joint action at time $t$, $x_t \sim f(x_{t-1},\mathbf{u}_{t-1})$ (for $t>0$), and $\mathbb{T}$ is the trajectory space.
In stochastic games, all agents try to minimize their expected discounted cost separately, which is defined for the $i$-th agent as:
\begin{equation}
V_i(\vtheta) = \EX_{\tau} \left[C_i(\tau)\right],
%\prod_{j=1}^n \pi_j(\tau) \mathrm{d}\tau,
\end{equation}
where $C_i(\tau)=\sum_{t=0}^{\infty}\gamma_i^t C_i(x_t,\mathbf{u}_t)$ and the expectation is with respect to the agents' policies and the transition model.
Note that the expectation depends on all the agents' policies. We do not assume the convexity of the functions $V_i(\vtheta)$.\\
We define the \textit{symultaneous gradient}~\citep{letcher2018stable} as the concatenation of the gradient of each discounted return function respect to the parameters of each player: 
\begin{equation}
\xi(\vtheta) = (\nabla_{\vtheta_1} V_1^T, \dots, \nabla_{\vtheta_n} V_n^T)^T.
\end{equation}
%Interestingly, performing gradient descent on a stochastic game implies following the simultaneous gradient, as every player $i$ updates its parameters with the component $\nabla_{\vtheta_i} J_i$.
The \emph{Jacobian} of the game~\citep{letcher2018stable, ratliff2013characterization} $\Hes$ is an $nd \times nd$  matrix, where $n$ is the number of agents and $d$ the number of policy parameters for each agent. $\Hes$ is composed by the matrix of the derivatives of the simultaneous gradient, i.e., for each player $i$ the $i$-th row of its
hessian:
\begin{equation*}
\resizebox{\columnwidth}{!}{$
%H = \nabla_\vtheta \xi  = \begin{pmatrix}
%H^1_1 \\
%H^2_2 \\
%\vdots \\
%H^n_n
%\end{pmatrix}
\Hes = \nabla_{\vtheta} \xi =  \begin{pmatrix}
\nabla_{\vtheta_1}^2 V_1 & \nabla_{\vtheta_1,\vtheta_2} V_1  & \cdots & \nabla_{\vtheta_1,\vtheta_n} V_1 \\
\nabla_{\vtheta_2,\vtheta_1} V_2 & \nabla_{\vtheta_2}^2 V_2 & \cdots & \nabla_{\vtheta_2,\vtheta_n} V_2 \\
\vdots  & \vdots  & \ddots & \vdots  \\
\nabla_{\vtheta_n,\vtheta_1} V_n & \nabla_{\vtheta_n,\vtheta_2} V_n & \cdots & \nabla_{\vtheta_n}^2 V_n
\end{pmatrix}.$}
\end{equation*}

\textbf{Game dynamics}
\label{dynamics}
$\Hes$ is a square matrix, not necessarily symmetric. The antisymmetric part of $\Hes$ is caused by each agent's different cost functions and can cause cyclical behavior in the game (even in simple cases as bimatrix zeros-sum games, see Figure~\ref{fig:dynamics_ham}). On the other hand, the symmetric part represents the \quotes{cooperative} part of the game.  
In~\citep{balduzzi2018mechanics}, the authors proposed how to decompose $\Hes$ in its symmetric and antisymmetric component using the Generalized Helmholtz decomposition~\citep{wills1958vector, balduzzi2018mechanics} \footnote{The Helmholtz decomposition applies to any vector field~\citep{wills1958vector}.}.
\begin{proposition}
The Jacobian of a game decomposes uniquely into two components $\Hes = S + A$, where $S = \frac{1}{2} (\Hes + \Hes^T)$ and $A = \frac{1}{2} (\Hes - \Hes^T)$. 
\end{proposition}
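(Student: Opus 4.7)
The plan is to prove the proposition in two stages: (i) exhibit a decomposition with the stated properties to establish existence, and (ii) argue that any two such components must coincide with the ones given, to establish uniqueness. Because $\Hes$ is a square $nd\times nd$ matrix, the claim is really the standard linear-algebra fact that any square matrix decomposes uniquely into a symmetric and an antisymmetric part; the Helmholtz interpretation (where $S$ corresponds to the irrotational/potential component of the vector field $\xi$ and $A$ to the solenoidal/Hamiltonian component) follows only after identifying $\Hes$ with the Jacobian of $\xi$.

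For existence, I would first define $S=\tfrac12(\Hes+\Hes^T)$ and $A=\tfrac12(\Hes-\Hes^T)$ and check the three required properties directly. Symmetry of $S$ follows from $S^T = \tfrac12(\Hes^T + (\Hes^T)^T)=\tfrac12(\Hes^T+\Hes)=S$. Antisymmetry of $A$ follows from $A^T = \tfrac12(\Hes^T-\Hes)=-A$. Finally, $S+A = \tfrac12(\Hes+\Hes^T)+\tfrac12(\Hes-\Hes^T)=\Hes$. These three one-line verifications are all the work existence requires.

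For uniqueness, suppose $\Hes = S' + A'$ with $S'$ symmetric and $A'$ antisymmetric. Taking the transpose yields $\Hes^T = (S')^T + (A')^T = S' - A'$. Adding the two equations gives $\Hes + \Hes^T = 2S'$, and subtracting gives $\Hes - \Hes^T = 2A'$. Hence $S'=\tfrac12(\Hes+\Hes^T)=S$ and $A'=\tfrac12(\Hes-\Hes^T)=A$, and the decomposition is unique.

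There is no real obstacle here: the statement is a direct specialization of a textbook fact about square matrices. The only subtlety worth flagging is that the proposition as written does not require $\Hes$ to be a Hessian of any scalar potential, so no integrability (symmetry-of-mixed-partials) condition needs to be invoked; both $S$ and $A$ are defined purely algebraically from $\Hes$. The deeper content of the Helmholtz viewpoint, namely that $S$ encodes a potential-game-like structure and $A$ a Hamiltonian-game-like structure, is not part of this proposition and will be developed in subsequent sections.
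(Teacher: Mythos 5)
Your proof is correct and complete: existence by direct verification that $S=\tfrac12(\Hes+\Hes^T)$ is symmetric, $A=\tfrac12(\Hes-\Hes^T)$ is antisymmetric, and they sum to $\Hes$; uniqueness by transposing $\Hes=S'+A'$ and solving the resulting linear system. The paper states this proposition without proof, citing the generalized Helmholtz decomposition of \citet{balduzzi2018mechanics}, and your argument is exactly the standard one that underlies that citation, so there is nothing to reconcile; your closing remark that no integrability condition is needed is also a correct and worthwhile observation.
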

Components $S$ and $A$ represent the irrotational (Potential), $S$, and the solenoidal (Hamiltonian), $A$, part of the game, respectively. The irrotational component is its curl-free component, and the solenoidal one is the divergence-free one.

%Games with $A = \mathbf{0}$ are called Exact Potential games~\cite{monderer1996potential} and games with $S = \mathbf{0}$ are called Hamiltonian games~\cite{balduzzi2018mechanics}. In general, games are neither Potential games nor Hamiltonian games, but mixed ones.

\textbf{Potential games} are a class of games introduced by \cite{monderer1996potential}. A game is a potential game if there exists a potential function $\phi: \mathbb{R}^{n \times d} \rightarrow \mathbb{R}$, such that: $\phi(\vtheta^{'}_i, \vtheta_{-i}) - \phi(\vtheta^{''}_i, \vtheta_{-i}) = \alpha (V_i(\vtheta^{'}_i, \vtheta_{-i}) - V_i(\vtheta^{''}_i, \vtheta_{-i}))$. A potential game is an \textit{exact} potential game if $\alpha = 1$; exact potential games have $A = \mathbf{0}$. In these games, $\Hes$ is symmetric and it coincides with the hessian of the potential function. This class of games is widely studied because in these games gradient descent converges to a Nash Equilibrium~\cite{rosenthal1973class, lee2016gradient}.  In the rest of the document, when we refer to potential games we refer to exact potential games.

\textbf{Hamiltonian games}, i.e., games with $S = \mathbf{0}$, were introduced in~\cite{balduzzi2018mechanics}. A Hamiltonian game is described by a Hamiltonian function, which specifies the conserved quantity of the game. Formally, a Hamiltonian system is fully described by a scalar function,  $\Ham : \mathbb{R}^{n \times d} \rightarrow \mathbb{R}$. The state of a Hamiltonian system is represented by the generalized coordinates $\mathbf{q}$ momentum and position $\mathbf{p}$, which are vectors of the same size. The evolution of the system is given by Hamilton's equations: 
$
\frac{d\textbf{p}}{dt} = -\frac{\partial \Ham}{\partial \textbf{q}},
\frac{d\textbf{q}}{dt} = +\frac{\partial \Ham}{\partial \textbf{p}}.
$
The gradient of $\Ham$ corresponds to $(S+A^T)\xi$ \citep{balduzzi2018mechanics}. 
In bimatrix games, Hamiltonian games coincide with zero-sum games, but this is not true in general games~\cite{balduzzi2018mechanics}.

\textbf{Desired convergence points}
In classic game theory, the standard solution concept is the Nash Equilibrium~\cite{nash1950equilibrium}. Since we focus on gradient-based methods and make no assumptions about the convexity of the return functions, we consider the concept of local Nash Equilibrium~\citep{ratliff2016characterization}. 
\begin{definition}
A point $\vtheta^*$ is a local Nash equilibrium if, $\forall  i$, there
is a neighborhood $B_i$ of $\vtheta^*_i$ such that $V_i(\vtheta_i,\vtheta^*_{-i}) \ge 
V_i(\vtheta^*_i,\vtheta^*_{-i})$ for any $\vtheta_i \in B_i$. 
\end{definition}
Gradient-based methods can reliably find local (not global) optima even in single-agent non-convex problems~\cite{lee2016gradient, lee2017first}, but they may fail to find local Nash equilibria in non-convex games.\\
Another desirable condition is that the algorithm converges into \textit{symmetric stable fixed points} \citep{balduzzi2018mechanics}.

\begin{definition}
A fixed point $\vtheta^*$ with $\xi(\vtheta^*) = 0$ is symmetric stable if $S(\vtheta^*) \succeq 0$ and $S(\vtheta^*)$ is invertible, symmetric unstable if $S(\vtheta^*) \prec 0$ and a strict saddle if $S(\vtheta^*)$ has an eigenvalue with negative real part. 
\end{definition}
Symmetric stable fixed points and local Nash equilibria are interesting solution concepts, the former from an optimization point of view and the latter from a game-theoretic perspective. 
%\gr{aggiungi commento tra stable fixed points e local NE}

\textbf{Newton method} Newton's method~\cite{nocedal2006numerical} guarantees, under assumptions, a quadratic convergence rate to the root of a function, which, in optimization, is the derivative of the function to be optimized. This method is based on a second-order approximation of the twice differentiable function $g(\mathbf{\vtheta})$ that we are optimizing. Starting from an initial guess $\vtheta_0$, Newton's method updates the parameters $\vtheta$ by setting the derivative of the second-order Taylor approximation of $g(\vtheta)$ to $0$:
\begin{equation}
\vtheta_{t+1} = \vtheta_{t}-\nabla^2 g(\vtheta_t)^{-1} \nabla g(\vtheta_t).
\end{equation}
For non-convex functions, the hessian $\nabla^2 g(\vtheta)$ is not necessarily positive semidefinite and all critical points are possible solutions for Newton's method. Then, Newton's update may converge to a local minimum, a saddle, or a local maximum.
A possible solution to avoid this shortcoming is to use a modified version of the inverse of the hessian, called Positive Truncated inverse (PT-inverse)~\cite{nocedal2006numerical, paternain2019newton}:
\begin{definition}[PT-inverse]\label{d:PT-inverse}
Let $H \in \Real^{n \times n}$ be a symmetric matrix, $Q \in \Real^{n \times n}$ a basis of orthogonal eigenvectors of $H$, and $\Lambda \in \Real^{n \times n}$ a diagonal matrix of corresponding eigenvalues. The $|\Lambda|_m \in \Real^{n \times n}$ is the positive definite truncated eigenvalue matrix of $\Lambda$ with parameter $m$:
\begin{equation}
(|\Lambda|_m)_{ii} = \left\{
\begin{matrix}
    |\Lambda_{ii}| \quad if |\Lambda_{ii}| \ge m, \\
    m \quad \text{otherwise.}
 \end{matrix}\right.
\end{equation} 
The PT-inverse of $H$ with parameter $m$ is the matrix $|H|^{-1}_m = Q |\Lambda|^{-1}_m Q^T$.
\end{definition}
The PT-inverse flips the sign of negative eigenvalues and truncates small eigenvalues by replacing them with $m$. Then the usage of the PT-inverse, instead of the real one, guarantees convergence to a local minimum even in non-convex functions. These properties are necessary to obtain a convergent Newton-like method for non-convex functions.

 \section{Newton for Games}\label{s:newton_for_games}
 \begin{figure*}[t]
\centering
\begin{minipage}[t]{0.43\textwidth}
\centering
\begin{minipage}[t]{0.3\textwidth}
\centering
\includegraphics[width=\textwidth]{./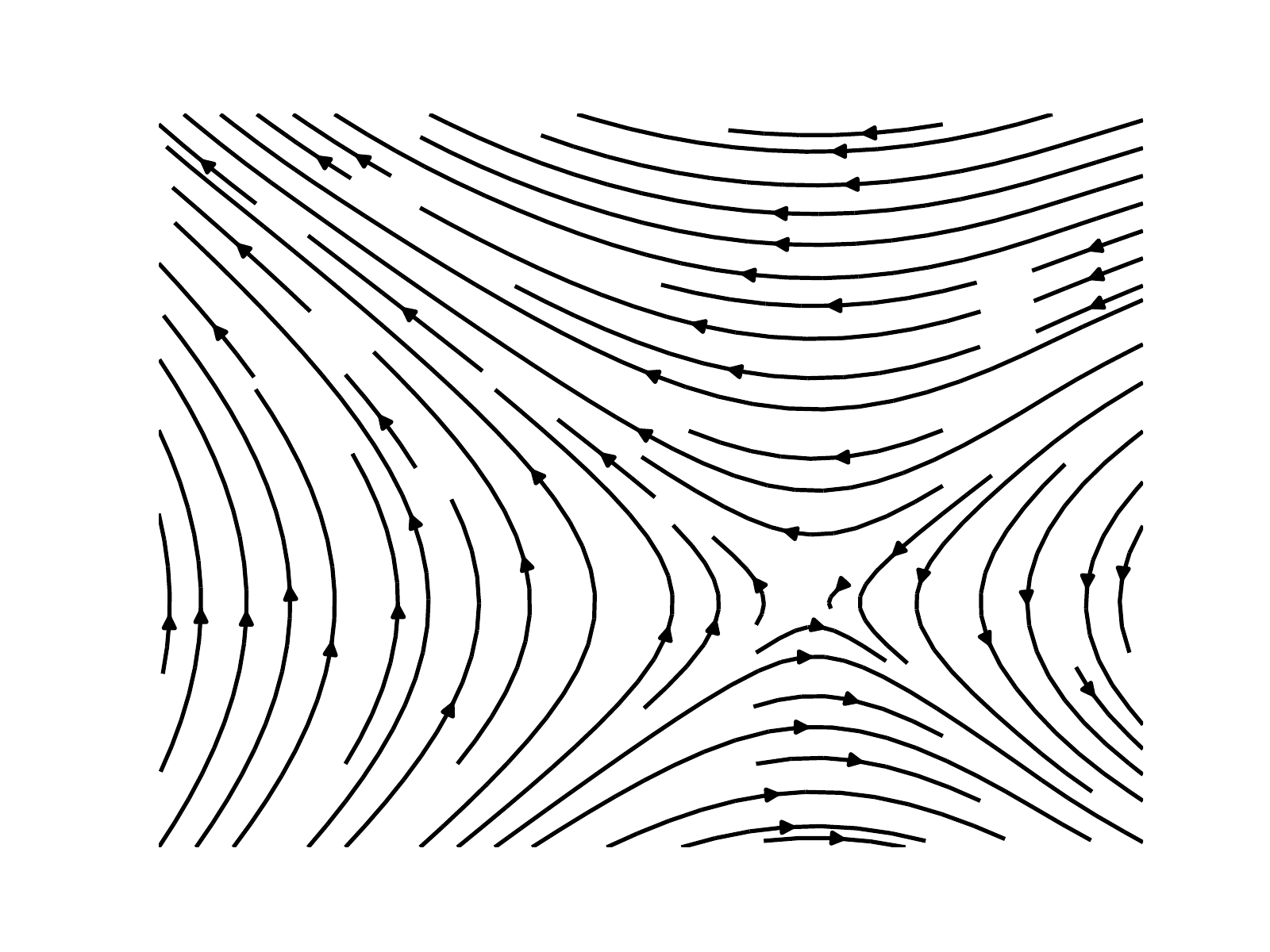}
\end{minipage}
\begin{minipage}[t]{0.3\textwidth}
\centering
\includegraphics[width=\textwidth]{./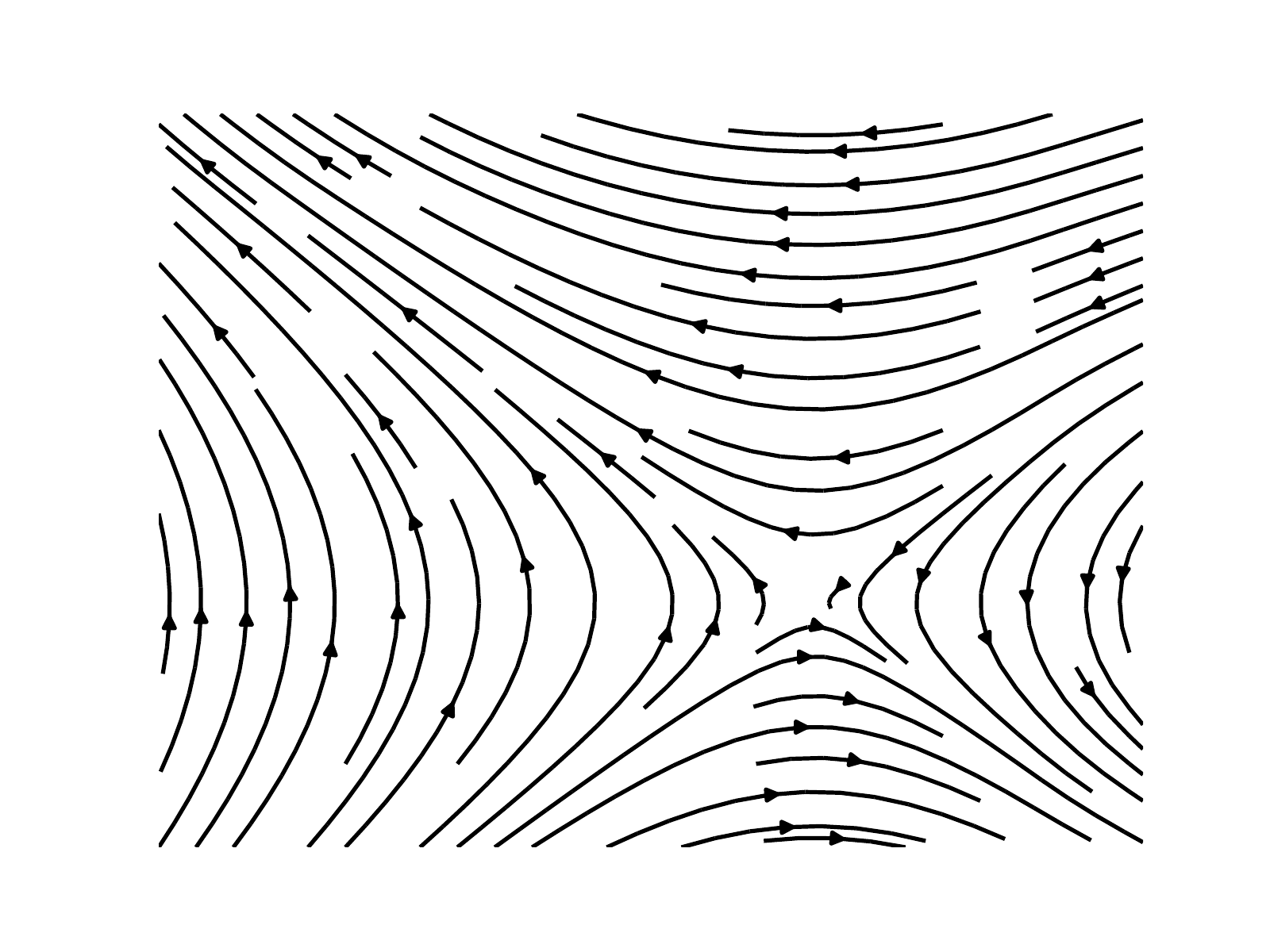}
\end{minipage}
\begin{minipage}[t]{0.3\textwidth}
\centering
\includegraphics[width=\textwidth]{./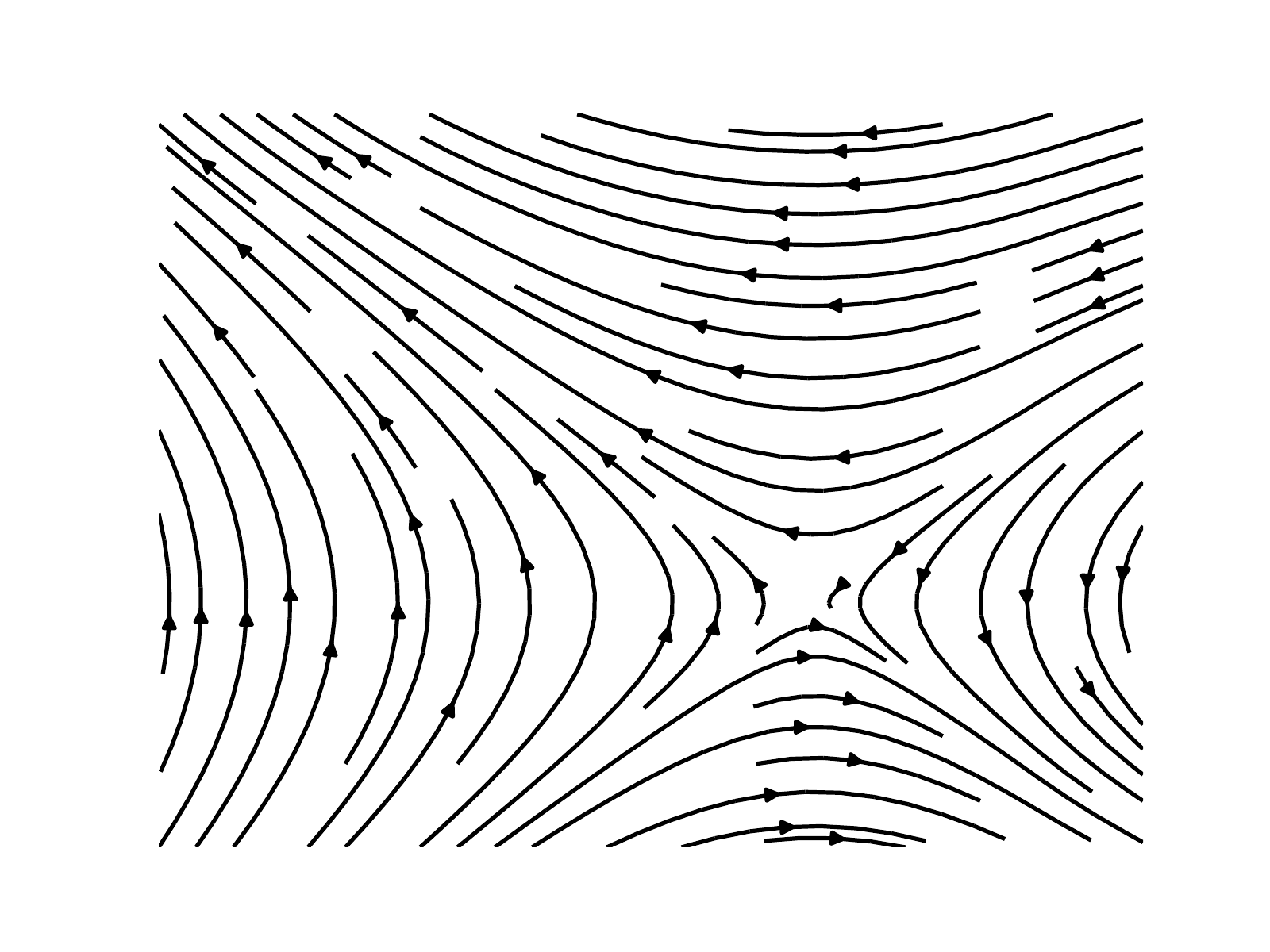}
\end{minipage}
\captionof{figure}{Phase of Potential games: from left phase dynamic of gradient descent, gradient on the potential function, and with \algname.}
\label{fig:dynamics_pot}
    \end{minipage}
    \hspace{1.5cm}
\begin{minipage}[t]{0.43\textwidth}
\centering
\begin{minipage}[t]{0.3\textwidth}
\centering
\includegraphics[width=\textwidth]{./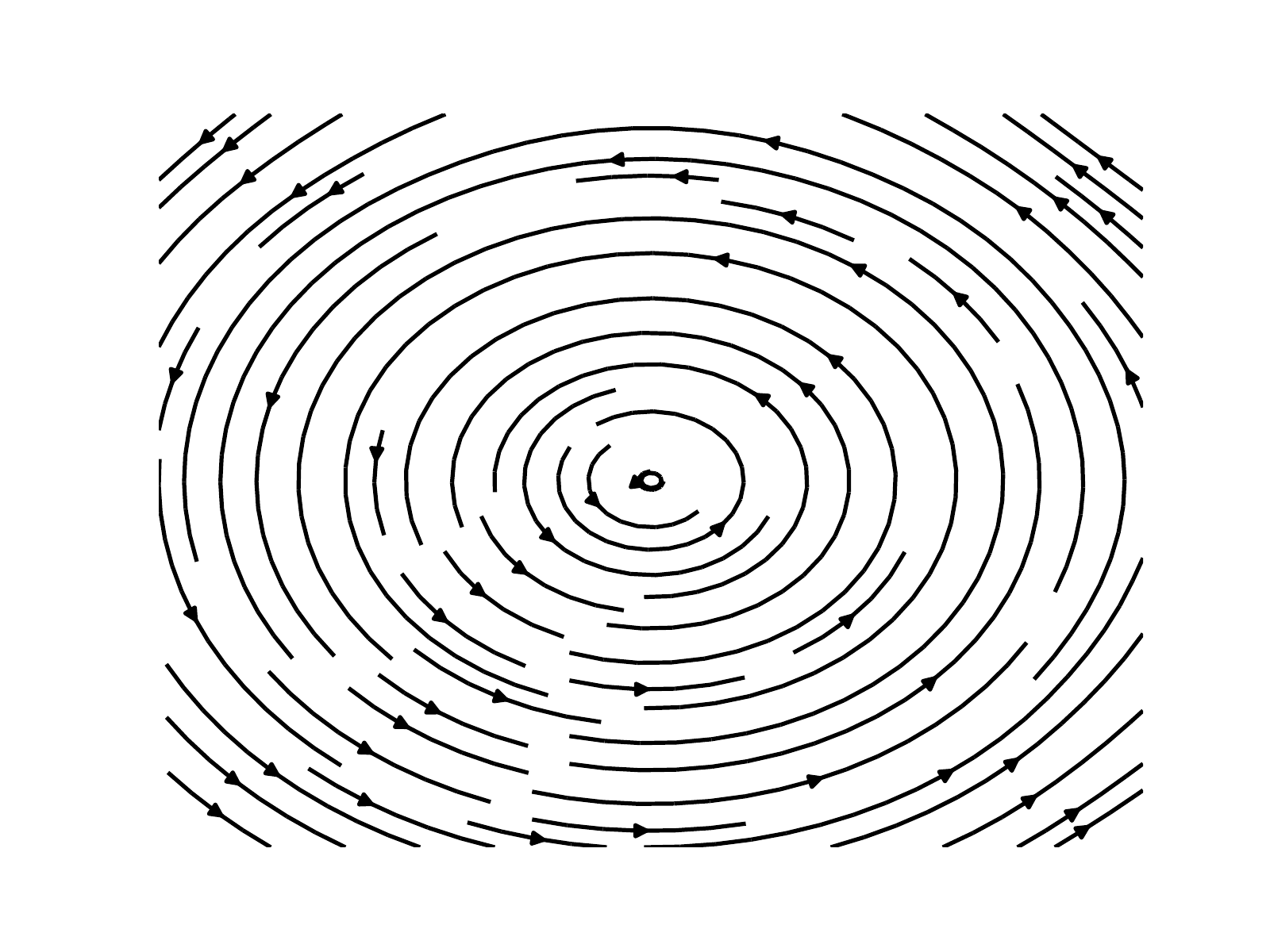}
\end{minipage}
\begin{minipage}[t]{0.3\textwidth}
\centering
\includegraphics[width=\textwidth]{./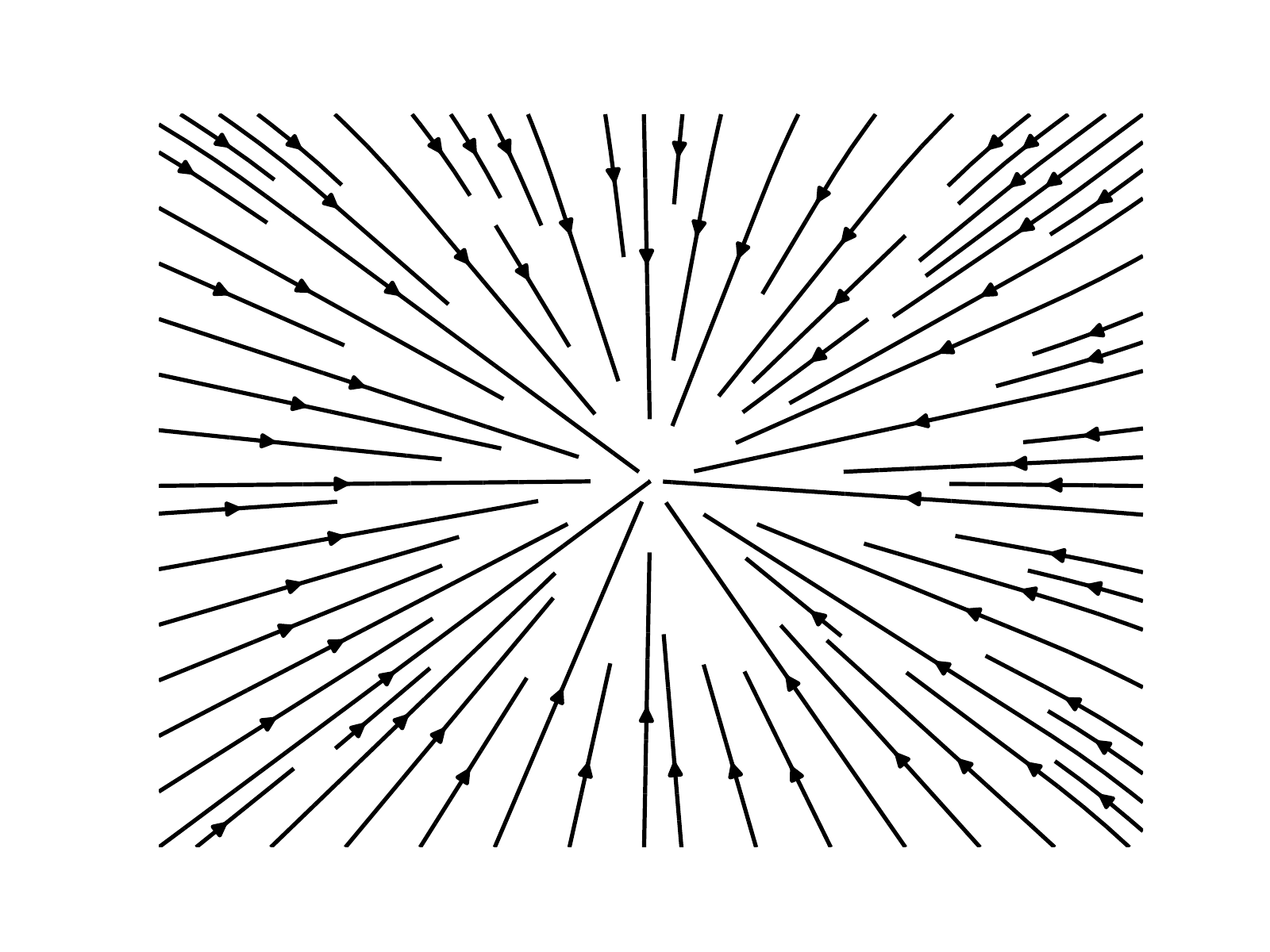}
\end{minipage}
\begin{minipage}[t]{0.3\textwidth}
\centering
\includegraphics[width=\textwidth]{./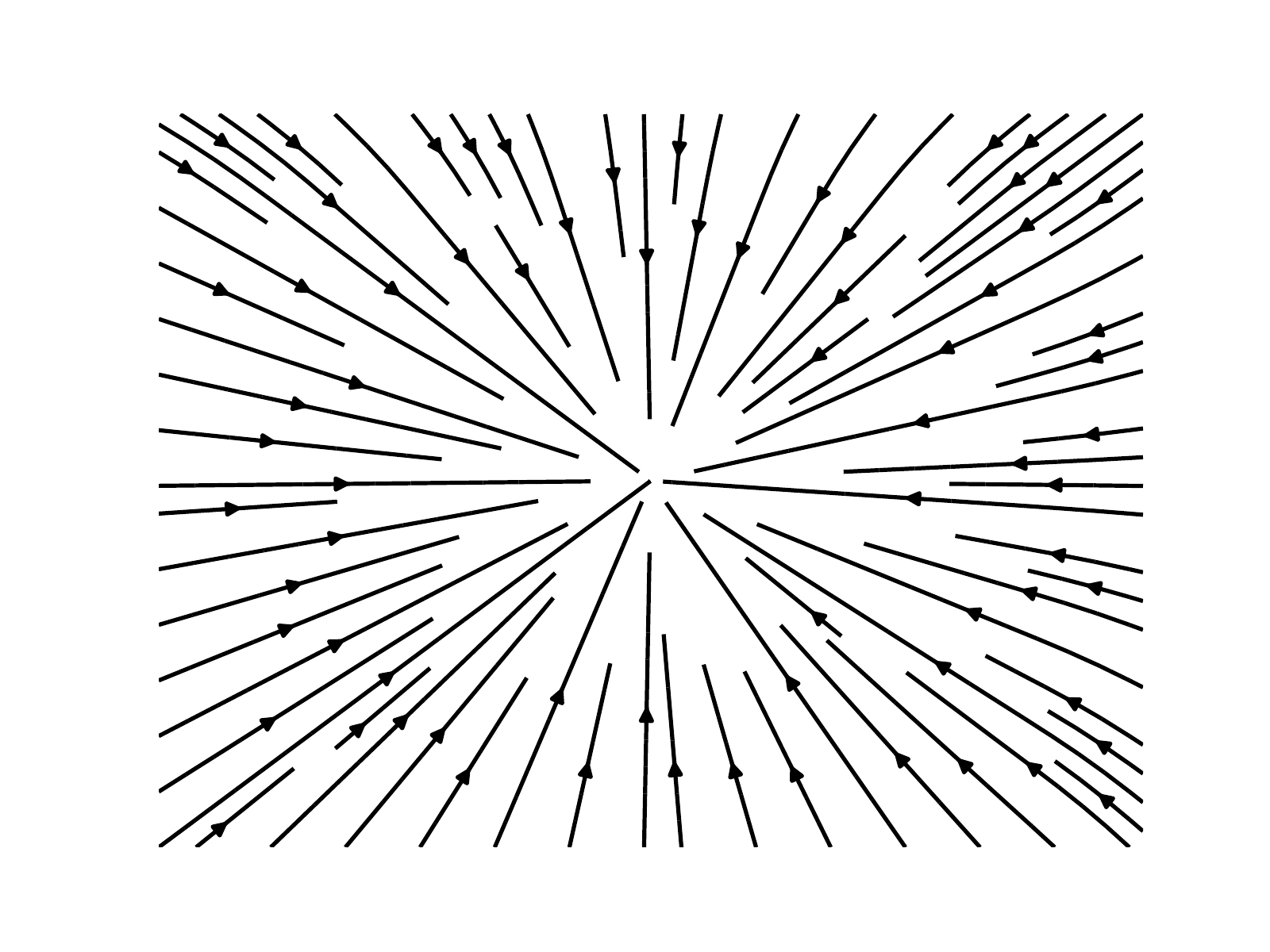}
\end{minipage}
\captionof{figure}{Phase of Hamiltonian games: from left phase dynamic of gradient descent, gradient on the Hamiltonian function, and with \algname.}
\label{fig:dynamics_ham}
\end{minipage}
\end{figure*}
In this section, we describe how to apply Newton-based methods to Continuous Stochastic Games. We start by showing how Newton's method can be applied to two game classes: Potential games and Hamiltonian Games. Then we describe an algorithm to extend Newton's method in general games.

\subsection{Newton's Method for Potential Games}
Potential games are a class of games characterized by the existence of a potential function $\phi: \mathbb{R}^{n \times d} \rightarrow \mathbb{R}$ which describes the dynamics of the system. In these games, gradient descent on the simultaneous gradient or the potential function converges to a (local) Nash Equilibrium as shown in Figure~\ref{fig:dynamics_pot}. In these games, $\Hes = S$, and the anti-symmetric component $A = \mathbf{0}$.  Considering the existence of the potential function $\phi$ (see Section~\ref{s:preliminaries}), it is sufficient to apply the Newton PT-inverse method (see Definition~\ref{d:PT-inverse}) on it to guarantee a quadratic convergence rate to a local Nash Equilibrium (see Figure~\ref{fig:dynamics_pot}). Newton's update for Potential games is:
\begin{equation}
\label{eq:potential_newton}
\vtheta_{t+1} = \vtheta_t - \eta S_m(\vtheta_t)^{-1} \xi(\vtheta_t),
\end{equation}
where $\eta$ is a learning rate and $m$ the parameter of the PT-inverse (see Definition~\ref{s:preliminaries}).
So, in Potential games, we have the same convergence properties as in single-function optimization. 
%We report below the number of steps that PT-inverse takes to escape from saddle points \cite{paternain2019newton}. 
The convergence into the local minima follows Newton's convergence proofs for non-convex functions \cite{paternain2019newton}.

\subsection{Newton's Method for Hamiltonian Games}
Hamiltonian games are characterized by a Hamiltonian function $\Ham : \mathbb{R}^{n \times d} \rightarrow \mathbb{R}$. In these games, the gradient descent does not converge to a stable fixed point but causes cyclical behavior. Instead, the gradient descent on the Hamiltonian function converges to a Nash equilibrium. Figure~\ref{fig:dynamics_ham} shows the dynamics of gradient descent w.r.t. $\xi$ and $\nabla \Ham$ on a Hamiltonian game: the figure points out that a gradient descent on $\xi$ cycles.

\begin{exmp}
Take a two-player bilinear game with agents with parameters $\vtheta_1$ and $\vtheta_2$ minimizing respectively $f(\vtheta): \mathbb{R}^{n \times d} \rightarrow \mathbb{R}$ and $g(\vtheta): \mathbb{R}^{n \times d} \rightarrow \mathbb{R}$ respectively. A point in this class of games is a Nash Equilibrium if $\xi(\vtheta) = 0$, i.e., $\nabla_{\vtheta_1} f = 0$ and $\nabla_{\vtheta_2} g = 0$, because $\nabla^2_{\vtheta_1}  f = 0$ and  $\nabla^2_{\vtheta_2}  g = 0$. Considering this, the Nash Equilibrium can be calculated in closed form (if the inverse exists\footnote{If the inverse does not exist and a Nash Equilibrium exists, the system is indeterterminate. Using the Moore-Penrose inverse we find an approximate Nash Equilibrium (the one with the smallest Euclidean norm).} 
) by setting the gradient equal to zero:
\begin{align*}
\begin{bmatrix}
\nabla_{\vtheta_1} f \\
\nabla_{\vtheta_2} g
\end{bmatrix}
+ 
\begin{bmatrix}
0 & \nabla_{\vtheta_1, \vtheta_2} f \\
\nabla_{\vtheta_2, \vtheta_1} g & 0
\end{bmatrix} 
\begin{bmatrix}
{\vtheta_1}  \\
{\vtheta_2} 
\end{bmatrix} = 0 \\
\begin{bmatrix}
{\vtheta_1}  \\
{\vtheta_2} 
\end{bmatrix} =
- \begin{bmatrix}
0 & \nabla_{\vtheta_1, \vtheta_2} f \\
\nabla_{\vtheta_2, \vtheta_1} g & 0
\end{bmatrix}^{-1}
 \begin{bmatrix}
\nabla_{\vtheta_1} f \\
\nabla_{\vtheta_2} g
\end{bmatrix}.
\end{align*}
\end{exmp}
The example above provides the intuition that the solution to quadratic Hamiltonian games is achieved by the following update rule (in Hamiltonian games $S = 0$):
\begin{equation}
\label{eq:hamiltonian_newton}
\vtheta_{t+1} = \vtheta_t - A(\vtheta_t)^{-1} \xi(\vtheta_t).
\end{equation}
In the following theorem, we state that even in this class of games the convergence to a local Nash Equilibrium (using the above update) is quadratic (see Figure~\ref{fig:dynamics_ham}). 
The proof is in Appendix A.

\begin{restatable}[]{thr}{hamiltonian}
\label{thm:hamiltonian}
Suppose that $\xi$ and $A$ are twice continuous differentiable and that $A$ is invertible in the (local) Nash Equilibrium $\vtheta^*$.  Then, there exists $\epsilon > 0$ such that iterations starting from any point in the ball $\vtheta_0 \in B(\vtheta^*, \epsilon)$ with center $\vtheta^*$ and ray $\epsilon$ converge to $\vtheta^*$. Furthermore, the convergence rate is quadratic.
\end{restatable}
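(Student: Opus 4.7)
The plan is to observe that in a Hamiltonian game the symmetric part $S$ vanishes, so the game Jacobian reduces to $\mathrm{J} = \nabla\xi = A$. Hence the update \eqref{eq:hamiltonian_newton} is nothing other than the classical Newton iteration applied to the root-finding problem $\xi(\vtheta)=0$, and Theorem \ref{thm:hamiltonian} becomes the standard local quadratic-convergence result for Newton's method under smoothness of $\nabla\xi$ and invertibility of $\nabla\xi(\vtheta^*)$. The proof then amounts to specializing the usual Kantorovich-style argument to this setting.

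Concretely, I would set $e_t := \vtheta_t - \vtheta^*$ and use that $\xi(\vtheta^*) = 0$ at a (local) Nash equilibrium of a differentiable game. Since $\xi$ is $C^2$ and $\nabla\xi = A$ is locally Lipschitz with some constant $L$ on a compact neighborhood of $\vtheta^*$, Taylor's theorem with integral remainder gives
\begin{equation}
\xi(\vtheta_t) = A(\vtheta^*)\,e_t + R_t, \qquad \|R_t\| \le \tfrac{L}{2}\|e_t\|^2.
\end{equation}
Rewriting one step of \eqref{eq:hamiltonian_newton} in terms of the error yields
\begin{equation}
e_{t+1} = e_t - A(\vtheta_t)^{-1}\xi(\vtheta_t) = A(\vtheta_t)^{-1}\bigl[(A(\vtheta_t)-A(\vtheta^*))\,e_t - R_t\bigr],
\end{equation}
and the bracket is $O(\|e_t\|^2)$ by Lipschitzness of $A$ together with the Taylor bound on $R_t$.

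The remaining technical point is to obtain a uniform bound on $\|A(\vtheta_t)^{-1}\|$ on a sufficiently small ball around $\vtheta^*$; this follows from continuity of $A$ and invertibility of $A(\vtheta^*)$ via a Neumann-series perturbation argument, giving $\|A(\vtheta)^{-1}\| \le 2\|A(\vtheta^*)^{-1}\|$ on some ball $B(\vtheta^*,r)$. Combining the pieces one obtains $\|e_{t+1}\| \le C\|e_t\|^2$ for a constant $C$ depending only on $L$ and $\|A(\vtheta^*)^{-1}\|$. Choosing $\epsilon := \min\{r, 1/(2C)\}$ then guarantees that iterates started in $B(\vtheta^*,\epsilon)$ remain in the ball, halve their error at every step, and converge to $\vtheta^*$ at a quadratic rate. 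The only genuine subtlety is this uniform invertibility estimate; everything else is a direct specialization of the classical Newton argument, so I do not anticipate a substantive obstacle beyond careful bookkeeping of constants.
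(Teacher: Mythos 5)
Your proposal is correct and follows essentially the same route as the paper: the paper's proof is likewise the standard local Newton convergence argument for the root-finding problem $\xi(\vtheta)=0$, using a Taylor bound $\nrm{\xi(\vtheta)-\xi(\vtheta_0)-A(\vtheta-\vtheta_0)}\le M\nrm{\vtheta-\vtheta_0}^2$ together with a uniform bound $\nrm{A^{-1}(\vtheta)}\le N$ on a small ball to conclude $\nrm{\vtheta_{t+1}-\vtheta^*}\le MN\nrm{\vtheta_t-\vtheta^*}^2$. The only cosmetic difference is that you expand around $\vtheta^*$ and absorb $A(\vtheta_t)-A(\vtheta^*)$ via Lipschitzness, whereas the paper expands around the current iterate; the two are equivalent.
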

      \begin{algorithm}[t]
    \caption{\algname}\label{alg:alg}
%    \label{alg:Test}
    \small
    \textbf{input}: discounted returns $ \mathcal V = \{V_i\}_{i=1}^n$, PT inverse parameter $m$ \\
    \textbf{output}: update rule
    \begin{algorithmic}
    	\STATE  Compute $\xi, \Hes, S, A, S^{-1}_m$
%    	 and its decomposition and the PTinverse($m$) of S: $\xi, \Hes, S, A, S^{-1}_m$ 
%    	\STATE Compute $S^{-1}_m \leftarrow$ PTinverse$(S,m)$
%    	\STATE Compute $A^{-1}$
    	\IF {$\cos{\nu_S} \ge 0$}
  \STATE \textbf{if} $\cos{\nu_S} \ge \cos{\nu_A}$ \textbf{then} \eqref{eq:potential_newton} \textbf{else} \eqref{eq:hamiltonian_newton}
  	\ELSE
  	\STATE \textbf{if} $\cos{\nu_S} \le \cos{\nu_A}$ \textbf{then} \eqref{eq:potential_newton} \textbf{else} \eqref{eq:hamiltonian_newton}
    		\ENDIF
    \end{algorithmic}
 \end{algorithm}
\subsection{Newton's Method for General Games}
In general games, it is not yet known whether and how the system's dynamics can be reduced to a single function as for Potential and Hamiltonian games.
Thus, finding a Newton-based update is more challenging: if we apply Newton's Method with the Jacobian PT-transformation, we can alter the Hamiltonian dynamics of the game. Instead, applying the inverse of the Jacobian as in the Hamiltonian games can lead to local maxima. In this section, we show how to build a Newton-based learning rule that guarantees desiderata similar to those considered in~\citep{balduzzi2018mechanics}: the update rule has to be compatible (D1) with Potential dynamics if the game is a Potential game, with (D2) Hamiltonian dynamics if the game is a Hamiltonian game and has to be (D3) attracted by symmetric stable fixed points and (D4) repelled by unstable ones. By compatible we mean that given two vectors $u, v$ then $u^Tv > 0$.

%(D1) it is compatible with game dynamics, (D2) it is compatible with Potential dynamics; (D3) it is compatible with Hamiltonian dynamics; (D4) it is attracted to stable fixed points; (D5) it is repelled by unstable equilibria. With compatible we intend that given two vectors $u, v$ then their dot product is nonnegative. Instead 
The algorithm that we propose (see Algorithm~\ref{alg:alg}) chooses the update to perform between the two updates in~\eqref{eq:potential_newton} and~\eqref{eq:hamiltonian_newton}. The choice is based on the angles between the gradient of the Hamiltonian function $\Ham$ and the two candidate updates' directions.
In particular, we compute 
\begin{equation*}
\cos{\nu_S} = \frac{(S_m^{-1}\xi)^T\nabla\Ham}{\nrm{S_m^{-1}\xi}\nrm{\nabla\Ham}},~ \cos{\nu_A} = \frac{(A^{-1}\xi)^T\nabla\Ham}{\nrm{A^{-1}\xi}\nrm{\nabla\Ham}}.
\end{equation*}
When the cosine is positive, the update rule follows a direction that reduces the value of the Hamiltonian function (i.e., reduces gradient norm), otherwise the update rule points in an increasing direction of the Hamiltonian function. 
Notably, there is a connection between the positive/negative definiteness of $\Ham$ and the sign of $\cos{\nu_S}$.

\begin{restatable}[]{lemma}{cosine}
\label{L:cosine}
Given the Jacobian $\Hes = S + A$ and the simultaneous gradient $\xi$, if $S \succeq 0$ then $\cos{\nu_S} \ge 0$; instead if $S \prec 0$ then $\cos{\nu_S} < 0$.
\end{restatable}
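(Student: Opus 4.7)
Since the denominator of $\cos{\nu_S}$ is a product of vector norms and is non-negative, it suffices to study the sign of the numerator $(S_m^{-1}\xi)^T\nabla\Ham$. The plan is to expand this using the identity $\nabla\Ham=(S+A^T)\xi=(S-A)\xi$ (since $A$ is antisymmetric) together with the symmetry of the PT-inverse, obtaining
\[
(S_m^{-1}\xi)^T\nabla\Ham \;=\; \xi^T S_m^{-1} S\, \xi \;-\; \xi^T S_m^{-1} A\, \xi,
\]
and then to show each piece has a controllable sign.

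The first piece I would handle by simultaneously diagonalising $S$ and $S_m^{-1}$: by Definition~\ref{d:PT-inverse}, $S_m^{-1}$ shares the orthonormal eigenbasis $Q$ of $S$, so writing $S=Q\Lambda Q^T$ yields $S_m^{-1}S=Q(|\Lambda|_m^{-1}\Lambda)Q^T$ with eigenvalues $\lambda_i/\max(|\lambda_i|,m)$. When $S\succeq 0$ every ratio lies in $[0,1]$, giving $\xi^T S_m^{-1}S\xi\ge 0$; when $S\prec 0$ every ratio lies in $[-1,0)$, giving $\xi^T S_m^{-1}S\xi<0$ for any $\xi\ne 0$. So the leading term already carries the sign required by the two cases of the lemma.

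For the cross term I would use that $S_m^{-1}$ is symmetric and $A$ antisymmetric: $(S_m^{-1}A)^T=A^T S_m^{-1}=-AS_m^{-1}$, whence $S_m^{-1}A+AS_m^{-1}$ is antisymmetric and $\xi^T(S_m^{-1}A+AS_m^{-1})\xi=0$, equivalently $\xi^T S_m^{-1}A\xi=\tfrac{1}{2}\xi^T[S_m^{-1},A]\xi$, a commutator form that vanishes in the pure potential case $A=0$ and is vacuous in the pure Hamiltonian case $S=0$.

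The main obstacle is the genuinely mixed case, where $[S_m^{-1},A]$ does not vanish and its quadratic form has no a priori sign. My plan there is to combine the regularisation bound $\|S_m^{-1}\|\le m^{-1}$ (which follows from Definition~\ref{d:PT-inverse}) with the explicit spectrum of $S_m^{-1}S$ in the basis $Q$ to dominate the commutator contribution by the sign-definite leading term, thereby recovering the claimed inequality in both cases of the lemma.
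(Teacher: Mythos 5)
Your reduction to the sign of the numerator, the identity $\nabla\Ham=(S+A^T)\xi=(S-A)\xi$, the simultaneous diagonalisation giving $\xi^T S_m^{-1}S\,\xi=\sum_i\alpha_i^2\,\lambda_i/\max(|\lambda_i|,m)$ with the correct sign in both cases, and the identification of the cross term as the commutator form $\tfrac12\xi^T[S_m^{-1},A]\xi$ are all correct, and they isolate exactly the crux of the lemma. The gap is the final step: the commutator contribution cannot be dominated by the leading term using $\nrm{S_m^{-1}}\le m^{-1}$ or any other uniform bound. The leading term degenerates on directions where $S$ has small eigenvalues while the cross term does not. Concretely, take $S=\operatorname{diag}(2m,0)\succeq 0$ and $A=\left(\begin{smallmatrix}0&a\\-a&0\end{smallmatrix}\right)$, so $S_m^{-1}=\operatorname{diag}(\tfrac{1}{2m},\tfrac{1}{m})$; then $\xi^TS_m^{-1}S\,\xi=\xi_1^2$ while $-\xi^TS_m^{-1}A\,\xi=\tfrac{a}{2m}\xi_1\xi_2$, and choosing $\xi_2$ large with the appropriate sign makes the sum negative. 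So no domination of the kind you propose holds in the genuinely mixed case, and the plan cannot be completed as stated.

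For comparison, the paper does not bound the cross term at all: it writes $S_m^{-1}(S+A^T)=S_m^{-1/2}\bigl(S_m^{-1/2}(S+A^T)S_m^{-1/2}\bigr)S_m^{1/2}$ and evaluates the quadratic form of the \emph{symmetric congruence} $N=S_m^{-1/2}(S+A^T)S_m^{-1/2}$, where with $z=S_m^{-1/2}u$ one gets $u^TNu=z^T(S+A^T)z=z^TSz$ because the antisymmetric part is annihilated inside a quadratic form. That is the device your plan is missing: placing one factor $S_m^{-1/2}$ on each side of $(S+A^T)$ so that no cross term survives. Be aware, however, that the paper then returns from $N$ to $S_m^{-1}(S+A^T)$ by similarity, and similarity preserves eigenvalues, not quadratic forms: $\xi^TS_m^{-1}(S+A^T)\xi=w^TNv$ with $w=S_m^{-1/2}\xi\ne v=S_m^{1/2}\xi$. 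Your more explicit bookkeeping (and the example above) shows that the difficulty you identified reappears precisely in that transfer. If you want the statement as written, you need either the congruence argument together with an honest justification of that last step, or an extra hypothesis (e.g.\ that $S_m^{-1}$ and $A$ commute, or that the relevant quadratic form is the one in the variable $z=S_m^{-1/2}\xi$) under which the cross term genuinely vanishes; brute-force domination will not close it.
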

The idea of \algname~  is to use the sign of $\cos{\nu_S}$ to decide whether to move in a direction that reduces the Hamiltonian function (aiming at converging to a stable fixed point) or not (aiming at getting away from unstable points). In case $\cos{\nu_S}$ is positive, the algorithm chooses the update rule with the largest cosine value (i.e., which minimizes the angle with $\nabla\Ham$), otherwise, \algname~ tries to point in the opposite direction by taking the update rule that minimizes the cosine. In the following theorem, we show that the update performed by \algname~ satisfies the desiderata described above.
\begin{restatable}[]{thr}{reqth}
\label{th:reqtheorem}
The \algname~ update rule satisfies requirements (D1), (D2), (D3), and (D4).
\end{restatable}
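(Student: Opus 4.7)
The plan is to verify each of the four requirements in turn. For (D1) and (D2), I trace through Algorithm~\ref{alg:alg}'s selection rule in the degenerate regime ($A = \mathbf{0}$ or $S = \mathbf{0}$) and check the compatibility inner product directly, in the sense $u^Tv>0$ defined in the text. For (D3) and (D4), I invoke Lemma~\ref{L:cosine} to fix the sign of $\cos\nu_S$ and hence determine which branch is selected, then identify $\Ham = \tfrac{1}{2}\nrm{\xi}^2$ as a Lyapunov function whose descent or ascent witnesses attraction or repulsion.

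For (D1), a potential game has $A = \mathbf{0}$, so~\eqref{eq:hamiltonian_newton} is undefined and NOHD is constrained to~\eqref{eq:potential_newton}. Gradient descent on the potential $\phi$ points in direction $-\xi$, so compatibility reduces to $(-S_m^{-1}\xi)^T(-\xi) = \xi^T S_m^{-1}\xi > 0$, which holds because $S_m^{-1}$ is positive definite by Definition~\ref{d:PT-inverse}. For (D2), a Hamiltonian game has $S = \mathbf{0}$, hence $S_m = mI$ and $\nabla\Ham = A^T\xi$; antisymmetry of $A$ gives $\xi^T A^T\xi = 0$, whence $\cos\nu_S = 0$, while $\cos\nu_A = \nrm{\xi}^2/(\nrm{A^{-1}\xi}\nrm{A^T\xi}) > 0$. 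The first branch ($\cos\nu_S \ge 0$) is entered, but $\cos\nu_S < \cos\nu_A$, so~\eqref{eq:hamiltonian_newton} is selected; compatibility with Hamiltonian descent $-\nabla\Ham = -A^T\xi$ then reduces to $(-A^{-1}\xi)^T(-A^T\xi) = \nrm{\xi}^2 > 0$.

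For (D3) and (D4), Lemma~\ref{L:cosine} ties the sign of $\cos\nu_S$ to the definiteness of $S$. At a symmetric stable fixed point, $S \succeq 0$ gives $\cos\nu_S \ge 0$, so NOHD enters the first branch and picks the update that maximises the cosine; the chosen direction $-u$ therefore satisfies $u^T\nabla\Ham \ge 0$ and is a descent direction for $\Ham = \tfrac{1}{2}\nrm{\xi}^2$. Since $\Ham$ is locally a Lyapunov function that vanishes only at the isolated zero $\vtheta^*$ of $\xi$ (using invertibility of $S(\vtheta^*)$ together with antisymmetry of $A$ to conclude $\Hes(\vtheta^*)$ is invertible), the iterate is attracted toward $\vtheta^*$. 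At a symmetric unstable fixed point, $S \prec 0$ gives $\cos\nu_S < 0$, the else branch fires, and the update minimising the cosine is an ascent direction for $\Ham$, pushing the iterate away from $\vtheta^*$.

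The main obstacle I anticipate is in (D3) and (D4): translating the infinitesimal ascent or descent of $\Ham$ along the selected update into a rigorous attraction or repulsion statement for the discrete iterates requires (i) strictness of the selected cosine whenever $\xi \neq 0$, so that the Lyapunov decrease or increase is strict rather than merely non-strict, and (ii) a step-size or trust-region argument ensuring the first-order Taylor term of $\Ham$ dominates the remainder in a neighbourhood of $\vtheta^*$. The degenerate evaluations needed for (D1) and (D2) are routine once the PT-inverse of the zero matrix is correctly identified as $mI$.
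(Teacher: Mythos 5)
Your proposal follows essentially the same route as the paper's own proof: (D1) and (D2) are verified by directly computing the compatibility inner products $\xi^T S_m^{-1}\xi > 0$ and $(A^{-1}\xi)^T A^T\xi = \nrm{\xi}^2$ in the degenerate cases $A=\mathbf{0}$ and $S=\mathbf{0}$, while (D3) and (D4) follow from Lemma~\ref{L:cosine} fixing the sign of $\cos\nu_S$ and hence the branch and the sign of the selected cosine. Your additional remarks---tracing the branch selection explicitly in (D2) and flagging that strictness of the cosine and a step-size argument are needed to turn first-order descent of $\Ham$ into genuine attraction of the discrete iterates---are more careful than the paper's sketch, which stops at the sign of the chosen cosine and defers actual local convergence to Theorem~\ref{thm:generalsum}.
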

\begin{proof}{\textit{(Sketch)}}
The requirement (D1) is satisfied if $(S^{-1}_m\xi)^T\nabla\phi$ and $(A^{-1} \xi)^T \nabla\phi$ are nonnegative and $\mathcal{G}$ is a Potential game; in this case the update rule is $S^{-1}_m \xi$ because $A = 0$. We have that $\nabla \phi = \xi$ and $\xi^T S^{-1}_m \xi \ge 0$ as said before. For requirement (D2) we can make similar considerations: in this case we have to show that $(S^{-1}_m\xi)^T\nabla\mathcal{H}$ and $(A^{-1} \xi)^T \nabla\mathcal H$ are nonnegative when $\mathcal{G}$ is a Hamiltonian game, that it is equal to say: $(A^{-1} \xi)^T A^T \xi = \nrm{\xi}^2$. Finally, the fulfillment of desiderata (D3) and (D4) is a consequence of Lemma~\ref{L:cosine}.
\end{proof}
Given the results from Theorem~\ref{th:reqtheorem} and Lemma~\ref{L:cosine}, we can argue that if $\xi$ points at a stable fixed point then \algname~ points also to the stable fixed point otherwise if $\xi$ points away from the fixed point also \algname~ points away from it.\\
Then we prove that \algname~ converges only to fixed points and, under some conditions, it converges locally to symmetric stable fixed points.
\begin{restatable}[]{lemma}{fixedpoint}
\label{thm:fixedpoint}
If \algname~ converges to a $\vtheta^*$ then $\xi(\vtheta^*) = \mathbf{0}$.
\end{restatable}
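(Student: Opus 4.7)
The plan is to exploit the simple observation that convergence $\vtheta_t \to \vtheta^*$ forces the update step to vanish, and then to use the invertibility of the matrices appearing in the two update rules to peel off $\xi(\vtheta^*) = \mathbf{0}$. Concretely, I would start by noting that $\vtheta_{t+1} - \vtheta_t \to \mathbf{0}$, and that at every iteration \algname~ selects one of the two directions: the potential direction $-\eta S_m(\vtheta_t)^{-1}\xi(\vtheta_t)$ from \eqref{eq:potential_newton} or the Hamiltonian direction $-\eta A(\vtheta_t)^{-1}\xi(\vtheta_t)$ from \eqref{eq:hamiltonian_newton}. By the pigeonhole principle there is a subsequence $\{t_k\}$ along which the same rule is chosen infinitely often.

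In the potential branch, passing to the limit along $\{t_k\}$ and using continuity gives $S_m(\vtheta^*)^{-1}\xi(\vtheta^*) = \mathbf{0}$. Since by Definition~\ref{d:PT-inverse} all eigenvalues of $S_m$ have absolute value at least $m>0$, the matrix $S_m(\vtheta^*)$ is invertible; applying its inverse yields $\xi(\vtheta^*) = \mathbf{0}$. In the Hamiltonian branch, the same continuity argument produces $A(\vtheta^*)^{-1}\xi(\vtheta^*) = \mathbf{0}$; invoking invertibility of $A(\vtheta^*)$ (the standing hypothesis for using \eqref{eq:hamiltonian_newton}, matching the hypothesis of Theorem~\ref{thm:hamiltonian}) again delivers $\xi(\vtheta^*) = \mathbf{0}$.

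The main obstacle is not technical but conceptual: the algorithm's selection rule may oscillate arbitrarily between the two update forms in a neighborhood of $\vtheta^*$, so one cannot assume a single branch is eventually chosen. The pigeonhole argument sidesteps this entirely because the conclusion $\xi(\vtheta^*) = \mathbf{0}$ is produced by each branch separately, so it suffices that at least one branch is active infinitely often. A minor subtlety to flag is the edge case in which $A(\vtheta^*)$ is singular while only the Hamiltonian rule is ever selected near the limit; this would either require reading $A^{-1}$ as a Moore--Penrose pseudoinverse (as already discussed in the footnote of the bilinear example) or appealing to Lemma~\ref{L:cosine} to rule out persistent selection of the Hamiltonian branch under the invertibility conditions used elsewhere in the paper.
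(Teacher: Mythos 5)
Your proposal is correct and follows essentially the same route as the paper: both arguments reduce to the observation that the selected update direction must vanish at the limit point, and that $S^{-1}_m$ is positive definite by construction so $S^{-1}_m\xi=\mathbf{0}$ forces $\xi=\mathbf{0}$. The edge case you flag (the Hamiltonian branch with $A(\vtheta^*)$ singular, or $A^{-1}\xi=\mathbf{0}$ with $S^{-1}_m\xi\neq\mathbf{0}$) is resolved in the paper not by a pseudoinverse or by Lemma~\ref{L:cosine}, but by the selection rule itself: when the Hamiltonian direction degenerates, $\cos\nu_A$ vanishes and Algorithm~\ref{alg:alg} necessarily falls back to the update $-S^{-1}_m\xi$, so only the positive definiteness of $S^{-1}_m$ is ever needed to conclude.
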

\begin{restatable}[]{thr}{generalsum}
\label{thm:generalsum}
Suppose $\vtheta^*$ is a stable fixed point, and suppose $A, S, \Hes$ are bounded and Lipschistz continuous with modulus respectively $M_A, M_S, M_{\Hes}$ in the region of attraction of the stable fixed point $\xi(\vtheta^*)$.  Furthermore assume that $\norm{A^{-1}} \le N_A$ and  $\norm{S^{-1}} \le N_S$. Then there exists $\epsilon > 0$ such that the iterations starting from any point $\vtheta_0 \in B(\vtheta^*, \epsilon)$ converge to $\vtheta^*$.  
\end{restatable}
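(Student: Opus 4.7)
I would use $\Ham(\vtheta)=\frac{1}{2}\|\xi(\vtheta)\|^2$ as a Lyapunov function and show that, in a small enough ball around $\vtheta^*$, the direction selected by \algname~ produces a strict descent of $\Ham$. Combined with local invertibility of the Jacobian, this forces $\vtheta_t\to\vtheta^*$.

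The first step is to confine the analysis to the first branch of Algorithm~\ref{alg:alg}. Since a stable fixed point has $S(\vtheta^*)\succeq 0$ with $S(\vtheta^*)$ invertible, in fact $S(\vtheta^*)\succ 0$; Lipschitz continuity of $S$ then yields $\epsilon_1>0$ and $c_1>0$ with $S(\vtheta)\succeq c_1 I$ on $B(\vtheta^*,\epsilon_1)$. Lemma~\ref{L:cosine} gives $\cos\nu_S(\vtheta)\ge 0$ there, so \algname~ takes the first conditional and picks the direction $d_t\in\{\eta S_m^{-1}\xi,\,A^{-1}\xi\}$ whose cosine with $\nabla\Ham$ is the larger; in particular $d_t^T\nabla\Ham(\vtheta_t)\ge 0$, and the chosen step is a descent direction for $\Ham$.

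The quantitative core of the proof is to upgrade this sign statement to a uniform bound $d_t^T\nabla\Ham(\vtheta_t)\ge \kappa\|\xi(\vtheta_t)\|^2$ on $B(\vtheta^*,\epsilon_1)$ for some $\kappa>0$. Choosing the PT parameter $m\le 1/N_S$ forces $S_m^{-1}S=I$ and therefore $\xi^T S_m^{-1}S\,\xi=\|\xi\|^2$; the antisymmetric correction $\xi^T S_m^{-1}A^T\xi$ is controlled via the bound on $\|A\|$ and $N_S$, while the analogous bound for the Hamiltonian candidate follows from $A^{-1}A=I$ together with $\|A^{-1}\|\le N_A$. Positivity of $\kappa$ is inherited from the linearisation at $\vtheta^*$: the relevant cosine is a continuous function on the unit sphere whose minimum is strictly positive because $\Hes(\vtheta^*)=S(\vtheta^*)+A(\vtheta^*)$ is invertible (its symmetric part is $\succeq c_1 I$). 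Combining this lower bound with a second-order Taylor expansion of $\Ham$, the Lipschitz moduli $M_A,M_S,M_\Hes$, and the step-size estimate $\|d_t\|\le \max(\eta N_S,N_A)\|\xi(\vtheta_t)\|$, I would obtain
\begin{equation*}
\Ham(\vtheta_{t+1})\le \Ham(\vtheta_t)-\kappa\|\xi(\vtheta_t)\|^2+C\|\xi(\vtheta_t)\|^3,
\end{equation*}
for an explicit $C=C(N_A,N_S,M_A,M_S,M_\Hes,\eta)$.

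Shrinking $\epsilon\le\min(\epsilon_1,\kappa/(2C))$ absorbs the cubic remainder and gives $\Ham(\vtheta_{t+1})\le(1-\kappa)\Ham(\vtheta_t)$, so $\Ham$ decays geometrically along the trajectory and $\xi(\vtheta_t)\to 0$. Since $\Hes(\vtheta^*)$ is invertible, $\vtheta^*$ is the unique zero of $\xi$ in the neighborhood, whence $\vtheta_t\to\vtheta^*$ (Lemma~\ref{thm:fixedpoint} rules out any other limit). The main obstacle is exactly the quantitative strengthening of Lemma~\ref{L:cosine}: not merely $\cos\nu_S\ge 0$ but a strictly positive uniform lower bound, \emph{and} one that applies to both candidate directions so that which branch of Algorithm~\ref{alg:alg} is actually chosen does not affect the rate. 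This is where the three pieces of data $N_A$, $N_S$, and the spectral margin $c_1$ on $S$ all play an essential role.
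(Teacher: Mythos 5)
Your route (Lyapunov descent on $\Ham=\frac{1}{2}\nrm{\xi}^2$) is genuinely different from the paper's, which never introduces a Lyapunov function: it writes $\xi(\vtheta_0)=\xi(\vtheta_0)-\xi(\vtheta^*)$ via the integral form of the Taylor remainder, splits $\Hes(\cdot)-S(\vtheta_0)=(\Hes(\cdot)-\Hes(\vtheta_0))+A(\vtheta_0)$, and contracts the iterate error directly, obtaining $\norm{\vtheta_1-\vtheta^*}\le N_S L\norm{\vtheta_0-\vtheta^*}^2+N_A M_A\norm{\vtheta_0-\vtheta^*}$. However, your plan has a genuine gap exactly where you flag "the main obstacle": the uniform bound $d_t^T\nabla\Ham\ge\kappa\nrm{\xi}^2$ with $\kappa>0$ is asserted, not established, and the justification offered does not deliver it. Expanding, $\xi^T S_m^{-1}(S+A^T)\xi=\nrm{\xi}^2+\xi^T S_m^{-1}A^T\xi$ and $\xi^T(A^{-1})^T(S+A^T)\xi=\nrm{\xi}^2+\xi^T(A^{-1})^T S\xi$; the cross terms are only controlled by $N_S\nrm{A}$ and $N_A\nrm{S}$, and neither $N_S\nrm{A}<1$ nor $N_A\nrm{S}<1$ is among the hypotheses, so neither candidate direction is guaranteed a positive inner product with $\nabla\Ham$, let alone a uniform one. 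The compactness argument ("continuous cosine on the unit sphere with strictly positive minimum") fails because Lemma~\ref{L:cosine} only gives $\cos\nu_S\ge 0$, so the minimum over the sphere may be $0$; invertibility of $\Hes(\vtheta^*)$ does not exclude this. (Indeed, with $S=\mathrm{diag}(1,100)$ and antisymmetric $A$ with off-diagonal entry $10$, one gets $\xi^T S^{-1}(S+A^T)\xi<0$ at $\xi=(1,1)^T$, so even the sign is delicate when $S$ is ill-conditioned relative to $\nrm{A}$.)

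A second, independent problem is the claimed cubic remainder. The step satisfies $\nrm{d_t}=\Theta(\nrm{\xi(\vtheta_t)})$ while $\nabla^2\Ham(\vtheta^*)=\Hes(\vtheta^*)^T\Hes(\vtheta^*)\ne 0$, so the second-order term $\tfrac{1}{2}d_t^T\nabla^2\Ham\, d_t$ in the descent inequality is of order $\nrm{\xi(\vtheta_t)}^2$ --- the same order as the claimed decrease $\kappa\nrm{\xi(\vtheta_t)}^2$ --- not $\nrm{\xi(\vtheta_t)}^3$. Shrinking $\epsilon$ therefore cannot absorb it; you would need the further quantitative condition $\kappa>\tfrac{1}{2}\nrm{\nabla^2\Ham}\max(\eta N_S,N_A)^2$. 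Put concretely, after a full step $S^{-1}\xi$ one has $\xi(\vtheta_{t+1})\approx -A S^{-1}\xi(\vtheta_t)$ to first order, so $\Ham$ decreases only when $\nrm{A S^{-1}}<1$, which is the same missing smallness condition as above. Your outline correctly identifies the two load-bearing steps, but both are left as assertions and neither follows from the stated hypotheses.
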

More details about the proofs are given in Appendix A.\\
\begin{remark} 
In this paper, we focus our attention on convergence towards symmetric stable fixed points. However, some Nash Equilibria are not symmetric stable fixed points. On the other hand, symmetric stable fixed points are an interesting solution concept since in two-player zero-sum games (e.g., GANs \cite{goodfellow2014generative}), all local Nash Equilibria are also symmetric stable fixed points~\cite{balduzzi2018mechanics}.
\end{remark}

\section{Learning via Policy Gradient in Stochastic Games}\label{s:learning}
Usually, agents do not have access to the full gradient or the Jacobian, and we need to estimate them.
We define a $T$-episodic trajectory as $\tau = (x_0, \mathbf{u}_0, C_0^1, \dots, C_0^n, \dots x_T, \mathbf{u}_T, C_T^1, \dots, C_T^n )$.
Following policy gradient derivation \cite{peters2008reinforcement}, $\nabla_{\theta_i} V_i$ can be estimated by:\footnote{With $\widehat{\nabla}^M_{\vtheta_i}$ we intend the estimator of $\nabla_{\vtheta_i}$ over $M$ samples.}
\small{
\begin{align*}
\widehat{\nabla}^M_{\vtheta_i} V_i = \frac{1}{M} \sum_{m=1}^M \sum_{t=0}^T \gamma^t C_{m,t}^i \sum_{t^{'} = 0}^t \nabla \log \pi_{\vtheta_i} (x_{m,t'}, u^{i}_{m,t'}) .
\end{align*}}
\normalsize
Then, to estimate the Jacobian $\Hes$, we have to compute the second-order gradient $\nabla_{\vtheta_k} \nabla_{\vtheta_j} V_i$, with $ 1 \le k \le n$, $1 \le j \le n$. If $k \ne j$ we derive~\cite{foerster2018learning} the second-order gradient, exploiting the independence of agents' policies:
\small{
\begin{align*}
&\widehat{\nabla}^M_{\vtheta_k, \vtheta_j}  V_i = \frac{1}{M} \sum_{m=1}^M \sum_{t=0}^T \gamma^t C_{m,t}^i \\ & \quad \times\sum_{t^{'} = 0}^t \left(\nabla \log \pi_{\vtheta_k} (x_{m,t'}, u^{k}_{m,t'} ) \right)  \left(\nabla \log \pi_{\vtheta_j} (x_{m,t'}, u^{j}_{m,t'} ) \right)^T .
\end{align*}}
\normalsize
 When $k = j$ we are evaluating the second-order gradient of $\pi_{\vtheta_j}$. To evaluate this part we derive the second-order gradient as done for the single-agent case~\cite{pmlr-v97-shen19d}, with  $g^i_m(\vtheta_k) =  \sum_{t = 0}^T  C_{t}^i \sum_{t^{'} = 0}^t\nabla \log \pi_{\vtheta_k} (x_{m,t'}, u_{m,t'}^i)) $: \footnote{ All derivations are reported in Appendix.}
\small{
\begin{align*}
\widehat{\nabla}^{2,M}_{\vtheta_k} V_i = \frac{1}{M} \sum_{m=1}^M \nabla g^{i}_{m}(\vtheta_k)
\nabla \log \pi_{\vtheta_k}(\tau_m)^T +  \nabla^2 g^i_m(\vtheta_k).
\end{align*}}
\normalsize
\section{Experiments}\label{s:experiments}
\begin{figure*}[h!]
\centering
\includegraphics[width=.8\textwidth]{./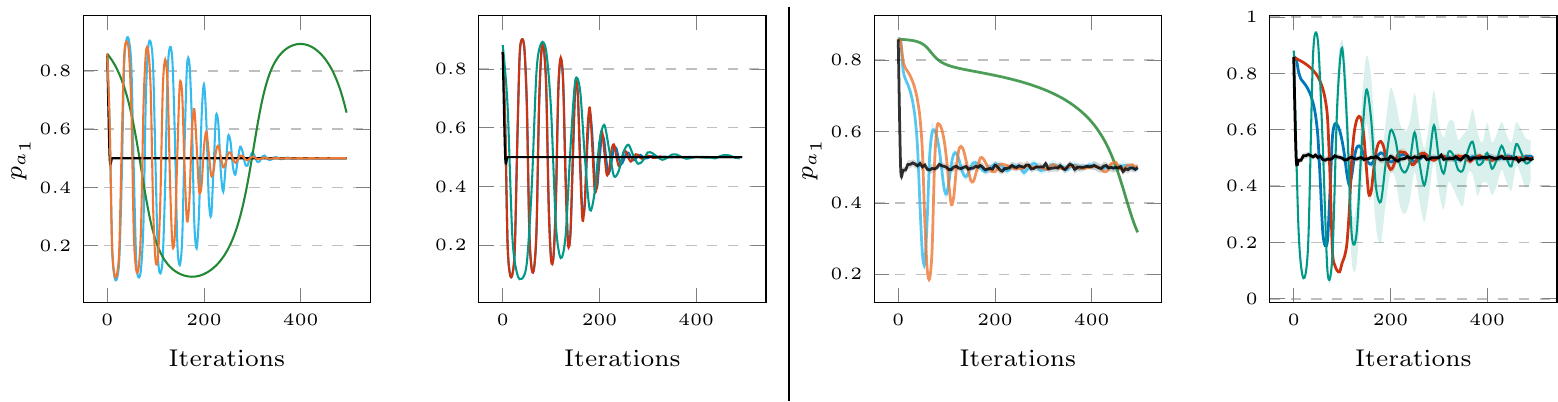}
%\caption{Matching pennies: probabilities of agent $1$ to perform first action with a Boltzmann parametrization of the policies. From left: first two plots with exact gradients; second ones with estimated gradients. Every algorithm with its best learning rate between $0.1, 0.5, 1.0$. c.i.95 \%}
\label{fig:MP}
\centering
%\includegraphics[width=.8\textwidth]{./}
%\caption{Dilemma: probabilities of agent $1$ to perform first action with a Boltzmann parametrization of the policies. From left: first two plots with exact gradients; second ones with estimated gradients. Every algorithm with its best learning rate between $0.1, 0.5, 1.0$. c.i.95 \%}
%\label{fig:DILEMMA}
\centering
\includegraphics[width=.8\textwidth]{./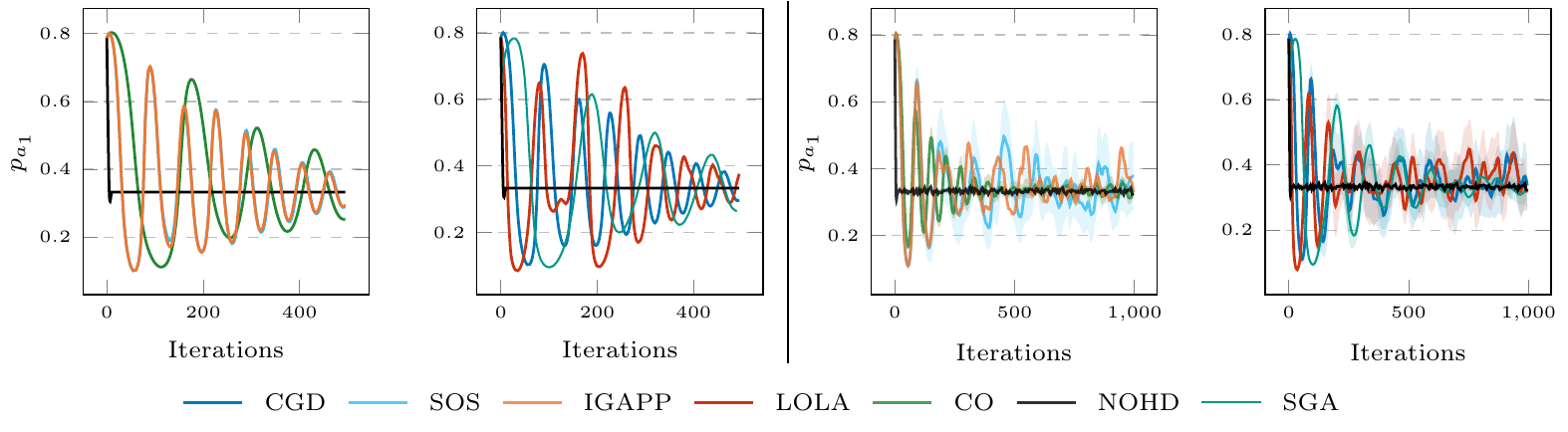}
\caption{Matching Pennies and Rock Paper Scissors (on the rows): probabilities of agent $1$ to perform first action with a Boltzmann parametrization of the policies. On the left: plots with exact gradients. On the right: plots with estimated gradients. Every algorithm with its best learning rate between $0.1, 0.5, 1.0$. c.i.~95. \%}
\label{fig:RPS}
\end{figure*}

%\begin{figure}[t]
%\begin{minipage}{0.5\textwidth}
%\centering
%\begin{tabular}{|c|c|c|c|}
%\hline
%  & MP     & Dilemma& RPS      \\
%  \hline
%CGD & $0.84$ & $0.99$ & $0.97$ \\
%\hline
%\algname & $0.49$ & $0.02$ & $0.38 $ \\
%\hline
%LOLA & $1.00$ & $0.96$ & $0.88 $ \\
%\hline
%IGA-PP & $0.99$ & $0.99$ & $ 1.00$ \\
%\hline
%CO & $0.99$ & $1.00$ & $ 0.81$ \\
%\hline
%SOS & $0.99$ & $0.99$ & $ 0.80$ \\
%\hline
%SGA & $0.99$ & $0.99$ & $ 0.96$ \\
%\hline
%\end{tabular}
%\caption{Ratio between the mean convergence steps to Nash Equilibrium and the maximum mean convergence steps. $50$ repetitions are performed for each algorithm sampling the initial parameters from a normal distribution $\mathcal{N}(0,0.5^2)$.}
%\label{tab:ratio}
%\end{minipage}
%%\hspace{0.1cm}
%%\begin{minipage}{0.44\textwidth}
%%\centering
%%%\includegraphics[width=\textwidth]{./plots/}
%%%\caption{Average return $\frac{J_1+J_2}{2}$ for$100$ learning steps. 20 runs, 95 \% c.i.}
%%\label{fig:gridworld}
%%\end{minipage}
%\end{figure}

This section is devoted to the experimental evaluation of \algname. The proposed algorithm is compared with Consensus Optimization (CO) \citep{mescheder2017numerics}, Stable Opponent Shaping (SOS) \citep{letcher2018stable}, Learning with Opponent-Learning Awareness (LOLA) \citep{foerster2018learning}, Competitive Gradient Descent (CGD) \cite{schafer2019competitive}, Iterated Gradient Ascent Policy Prediction (IGA-PP) \cite{zhang2010multi} and Symplectic Gradient Adjustment (SGA) \cite{balduzzi2018mechanics}.

\subsection{Matrix Games}

We consider two matrix games: two-agent two-action Matching Pennies (MP) and two-agent three-action Rock Paper Scissors (RPS) (games' rewards are reported in Appendix B). Considering a linear parameterization of the agents' policies, MP and RPS are Hamiltonian games \footnote{In Appendix we report results also for Dilemma game}. The Nash equilibria of MP, and RPS are respectively $0.5$, and $0.333$ regarding the probability of taking action $1$. 
For the first experiment, we used a Boltzmann parametrization for the agents' policies and exact computation of gradients and Jacobian. In this setting, games lose their Hamiltonian or Potential property, making the experiment more interesting and the behavior of \algname~ not trivial. The results are shown in Figure~\ref{fig:RPS} (left side). For each game, we perform experiments with learning rates $0.1, 0.5, 1.0$. In the plots are reported only the best performance for each algorithm. In Matching Pennies we initialize probabilities to $[0.86, 0.14]$ for the first agent and to $[0.14, 0.86]$ for the second agent; instead in Rock Paper Scissors to $[0.66, 0.24, 0.1]$. \footnote{For readability, we show only a starting point. Appendix B contains the results for different starting points and more experiments.} The figure shows that each algorithm is able to converge to the Nash equilibrium (in MP, CO converges with a learning rate of $0.1$ and takes more than $1000$ iterations. The plot is reported in Appendix). In Table~\ref{tab:ratio} we reported the ratio between the number of steps each algorithm takes to converge and the maximum number of steps in which the slowest algorithm converges. For this simulation, we sampled $50$ random initializations of the parameters from a normal distribution with zero mean and standard deviation $0.5$. 
Table~\ref{tab:ratio} shows that \algname~ significantly outperforms other algorithms even when starting from random initial probabilities. \\
In the second experiment, the gradients and the Jacobian are estimated from samples. The starting probabilities are the same as in the previous experiment. We performed $20$ runs for each setting. In each iteration, we sampled $300$ trajectories of length $1$.  Figures~\ref{fig:RPS} (right side) show that \algname~ also in this experiment converges to the equilibrium in less than $100$ iterations. Instead, the other algorithms exhibit oscillatory behaviors. 
\begin{table}[h!]
\centering
%\begin{tabular}{c|ccc}
%\hline
%  & MP    & RPS      \\
%  \hline
%\algname & $\textbf{0.49}$  & $\textbf{0.38} $ \\
%CGD & $0.84$ & $0.97$ \\
%LOLA & $1.00$  & $0.88 $ \\
%IGA-PP & $0.99$  & $ 1.00$ \\
%CO & $0.99$  & $ 0.81$ \\
%SOS & $0.99$  & $ 0.80$ \\
%SGA & $0.99$  & $ 0.96$ \\
%\hline
%\end{tabular}
\resizebox{.47\textwidth}{!}{
\begin{tabular}{c|ccccccc}
\hline
  & \algname & CGD    & LOLA  &IGA-PP & CO & SOS & SGA  \\
  \hline
  MP & $\textbf{0.49}$ &$0.84$ & $1.00$  & $0.99$ & $0.99$& $0.99$& $ 0.99$ \\
  RPS &$\textbf{0.38} $ &$0.97$&  $0.88 $& $ 1.00$& $ 0.81$& $ 0.80$ & $0.96$ \\
%\algname & $\textbf{0.49}$  & $\textbf{0.38} $ \\
%CGD & $0.84$ & $0.97$ \\
%LOLA & $1.00$  & $0.88 $ \\
%IGA-PP & $0.99$  & $ 1.00$ \\
%CO & $0.99$  & $ 0.81$ \\
%SOS & $0.99$  & $ 0.80$ \\
%SGA & $0.99$  & $ 0.96$ \\
\hline
\end{tabular}}
\caption{Ratio between the mean convergence steps to Nash Equilibrium and the maximum mean convergence steps. $50$ runs, sampling from a normal distribution $\mathcal{N}(0,0.5^2)$.}
\label{tab:ratio}
\end{table}
\begin{table}[h!]
\centering
\resizebox{.48\textwidth}{!}{
\begin{tabular}{c|ccccccc}
\hline 
   $|\vtheta|$ & NOHD    & IGAPP   & LOLA    & SOS     & SGA     & CGD     & CO \\ \hline
4   & 0.7205 & 0.6979 & 0.6983 & 0.7186 & 0.7302 & 0.7265 & 0.7006   \\
16  & 0.7898 & 0.7787 & 0.7735 & 0.7906 & 0.8358 & 0.8051 & 0.7758   \\
36  & 1.1416 & 1.0625 & 1.0874 & 1.0705 & 1.1992 & 1.1444 & 1.1066   \\
64  & 1.9486 & 1.6342 & 1.6162 & 1.6735 & 1.9555 & 1.8955 & 1.8850   \\
100 & 3.4070 & 2.7734 & 2.6905 & 2.8169 & 3.4799 & 3.3126 & 3.2977   \\
144 & 5.9191 & 4.6260 & 4.4351 & 4.8438 & 6.5164 & 5.8440 & 5.7876  \\
\hline
\end{tabular}}
\caption{Computation time of one learning update of each algorithm with increasing parameter space size. 20 runs. }
\label{tab:computational}
\end{table}

\begin{figure*}[t!]
\centering
\includegraphics[width=0.85\textwidth]{./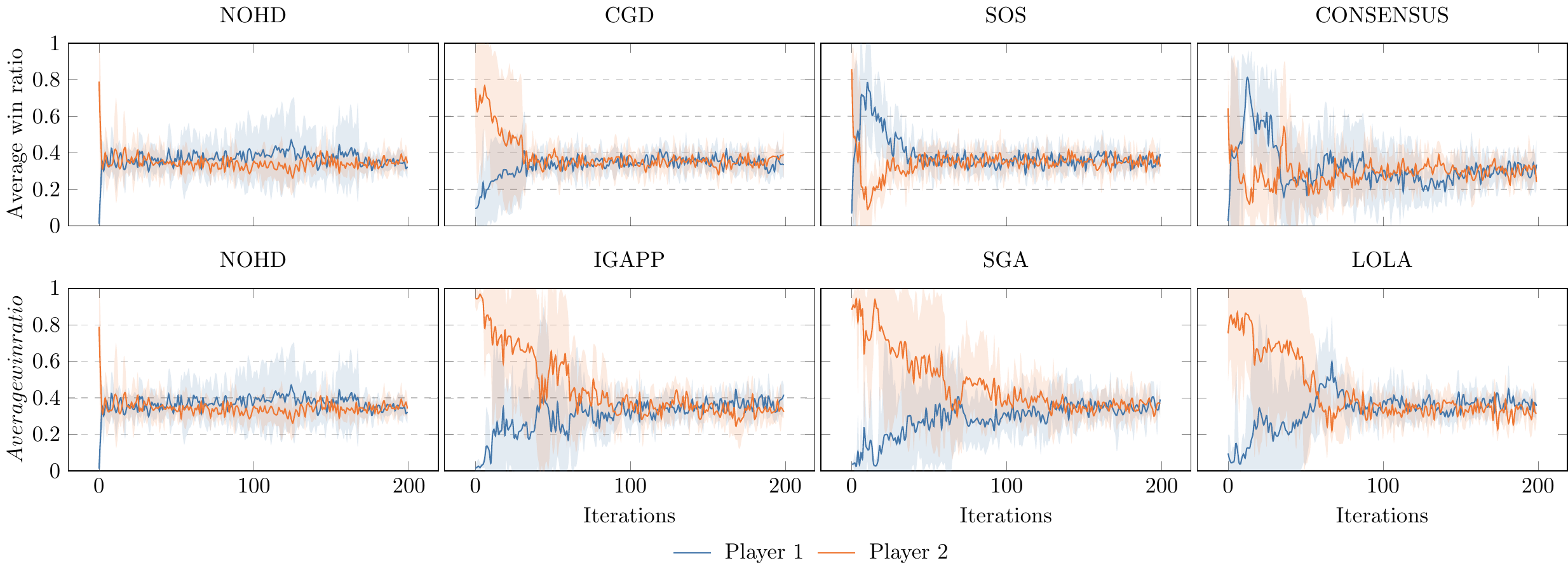}
\\
\includegraphics[width=0.85\textwidth]{./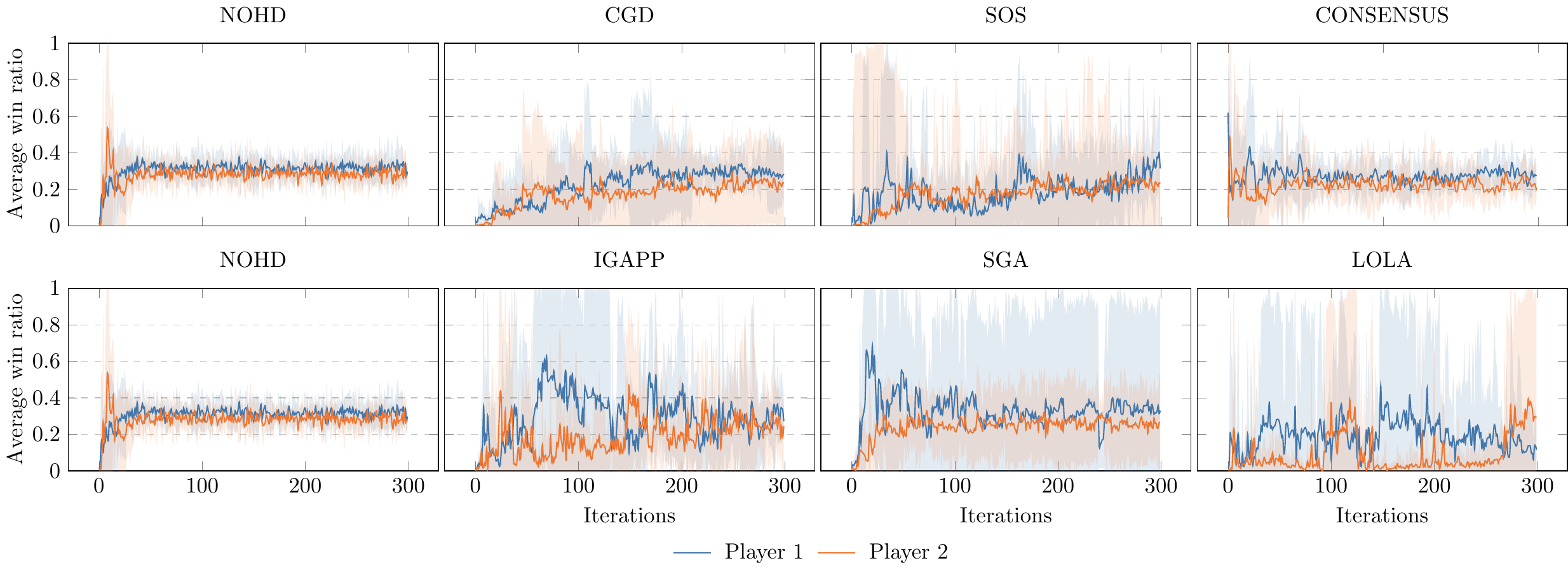}
\caption{Average wins for player $1$ and player $2$ in two gridworld environments. $10$ runs, c.i.~98\%}
\label{fig:gridworld}
\end{figure*}
\subsection{Continuous Gridworlds}
This experiment aims at evaluating the performance of \algname~ in two continuous Gridworld environments. The first gridworld is the continuous version of the second gridworld proposed in \cite{hu2003nash}: the two agents are initialized in the two opposite lower corners and have to reach the same goal; when one of the two agents reaches the goal, the game ends, and this agent gets a positive reward. Each agent has to keep a distance of no less than $0.5$ with the other agent and, if they decide to move to the same region, they cannot perform the action. Also, in the second gridworld the agents are initialized in the two lower corners and have to reach the same goal, but they have to reach the goal with a ball. An agent can take the opponent's ball if their distance is less than $0.5$; the ball is randomly given to one of the two agents at the beginning of each episode. The agents' policies are Gaussian policies, linear in a set of respectively $72$ and $68$ radial basis functions, which generate the $\nu$ angle for the step's direction.~\footnote{More information about the experiments are given in Appendix.} For each experiment, we perform $10$ runs with random initialization.
%The gridworld is shown in figure \ref{fig:gridworld_image}. 
In Figure~\ref{fig:gridworld} we compare the performance of \algname~ with CO, IGA-PP, LOLA, SOS, and CGD in the two gridworlds. The figure shows the mean average win of the two players at each learning iteration of the algorithms. In figure~\ref{fig:gridworld}, at the top, we can see how NOHD outperforms the other algorithms in the first gridworld, converging in less than $30$ steps to the equilibrium. At the bottom, results for the second gridworld are shown: \algname~ converges quickly than the other algorithms, but Consensus has comparable performance.

\subsection{Computational Time}
In Table~\ref{tab:computational} we report the computation time of an update of each algorithm with increasing policy parameter sizes $|\vtheta|$, from $4$ to $144$.  As we can see, the computation time of \algname~ is comparable to that of the other algorithms.

\section{Conclusions}
Although multi-agent reinforcement learning has achieved promising results in recent years, few algorithms consider the dynamics of the system. In this paper, we have shown how to apply Newton-based methods in the multi-agent setting. The paper's first contribution is to propose a method to adapt Newton's optimization to two simple game classes: Potential games and Hamiltonian games. Next, we propose a new algorithm \algname~ that applies a Newton-based optimization to general games. The algorithm, such as SGA, SOS, and CGD, considers that agents can also act against their own interests to achieve a balance as quickly as possible. We also show that \algname~ avoids unstable equilibria and is attracted to stable ones. We then show how the algorithm outperforms some baselines in matrix games with parametric Boltzmann policies. Furthermore, the algorithm manages to learn good policies in two continuous gridworld environments. In future work, we will try to extend the algorithm \algname~ to settings where the agent cannot know the other agents' policies and cost functions.

%
%\section*{Ethical Impact}
%In this paper, we provide a new algorithm for optimization in multi-agent learning. Multi-agent learning is a research topic with a long history and has recently re-emerged due to advances in single-agent RL techniques. The potential applications of MARL algorithms are extremely wide and include the control of practical multi-agent systems, as robot team navigation \cite{corke2005networked}, control of mobile sensors \cite{cortes2004coverage}, drones \cite{pham2018cooperative}, autonomous driving \cite{shalev2016safe} or robotics \cite{kober2013reinforcement}. Most recent works are successful empirically, but, on the other hand, they do not guarantee convergence properties.
%The convergence of multi-agent algorithms to undesirable strategies can cause dangerous situations as they can lead to unsafe behavior.
%For this reason, it is relevant to study algorithms that can guarantee safe convergence properties.  This paper proposes a new algorithm attracted to stable fixed points, but we cannot guarantee that it will not converge to saddle ones. Furthermore, our work does not consider unsafe behaviors that could occur during the learning process.
%Future work should address safe multi-agent reinforcement learning algorithms.

\bibliographystyle{aaai}
\bibliography{example_paper.bib}

\appendix
\onecolumn
\section{Proofs and derivation}
\label{app}
In this appendix, we report the proofs and derivations of the results presented in the main paper.
\subsection{Proofs of Section 4}
\label{app:proof4}

\hamiltonian*
\begin{proof}
The proof is the standard proof of convergence for Newton's methods. We report here the steps for completeness.
Since $\xi$ is twice differentiable, its Taylor series expansion in $\vtheta_0$ is:
\begin{equation}
\xi(\vtheta) - \xi(\vtheta_0) - A(\vtheta - \vtheta_0) = O(\nrm{\vtheta - \vtheta_0})^2.
\end{equation}
Because $A$ is twice continuous differentiable then $\exists \epsilon_1, M > 0$ s.t. if we take $\vtheta_0, \vtheta \in B(\vtheta^*, \epsilon_1)$ then we have that:
\begin{equation}
\label{lip}
\nrm{\xi(\vtheta) - \xi(\vtheta_0) - A(\vtheta  - \vtheta_0)} \le M \nrm{\vtheta - \vtheta_0}^2.
\end{equation}
Since $A$ is twice continuous differentiable and for assumption it is invertible in $\vtheta^*$ then there exists $\epsilon_2, N$ s.t. $ \forall \vtheta \in B(\vtheta^*, \epsilon_2$ $A^{-1}$ exists and $\nrm{A^{-1}(\vtheta)} \le N$ (see lemma 5.3 in \cite{chong2004introduction}).

Let $\epsilon = \min(\epsilon_1, \epsilon_2)$.
Now, we substitute $\vtheta$ with $\vtheta^*$ in \ref{lip} and we use the assumption that $\xi(\vtheta^*) = 0$:
\begin{equation}
\nrm{A(\vtheta_0  - \vtheta^*)- \xi(\vtheta_0)} \le M \nrm{\vtheta - \vtheta_0}^2.
\end{equation}
If we use the update rule and we take $\vtheta_1 \in B(\vtheta^*, \epsilon)$ we have that:
\begin{align}
\nrm{\vtheta_1 - \vtheta^*} &= \nrm{\vtheta_0 - \vtheta^* - A^{-1}(\vtheta_0) \xi(\vtheta_0)} \nonumber \\
&= \nrm{A(\vtheta_0)^{-1} (A(\vtheta_0) (\vtheta_0 - \vtheta^*) - \xi(\vtheta_0))} \nonumber \\
&\le \nrm{A(\vtheta_0)^{-1}} \nrm{(A(\vtheta_0) (\vtheta_0 - \vtheta^*) - \xi(\vtheta_0))},
\end{align}
where in the last step we used the triangular inequality. If we used the inequalities \ref{lip} we have that:
\begin{equation}
\nrm{\vtheta_1 - \vtheta^*}  \le MN \nrm{\vtheta_0 - \vtheta^*}^2.
\end{equation}
If we suppose that $\nrm{\vtheta_0 - \vtheta^*} \le \frac{\alpha}{MN}$ with $\alpha \in (0,1)$, then:
\begin{equation}
\nrm{\vtheta_1 - \vtheta^*}  \le \alpha \nrm{\vtheta_0 - \vtheta^*}^2.
\end{equation}
Then by induction we obtain that:
\begin{equation}
\nrm{\vtheta_{k+1} - \vtheta^*}  \le \alpha \nrm{\vtheta_{k} - \vtheta^*}^2.
\end{equation}
Hence $\lim_{k \rightarrow \infty} \nrm{\vtheta_k - \vtheta^*} = 0$ and therefore the sequence $\vtheta_k$ converges to $\vtheta^*$ if we take $\epsilon \le \frac{\alpha}{MN}$, and the order of convergence is at least $2$. 
\end{proof}

\cosine*
\begin{proof}
We know that $\cos \nu_S =  \frac{(S_m^{-1}\xi)^T\nabla\Hes}{\nrm{S_m^{-1}\xi}\nrm{\nabla\Hes}}$; then the sign of $\cos \nu_S$ depends on $(S_m^{-1}\xi)^T\nabla\Hes$. Suppose that $S \succeq 0$. We show that if  $S \succeq 0$ then $S^{-1}_m (S^T + A^T) \succeq 0$:
\begin{align*}
S^{-1}_m (S+A^T) = S_m^{-\frac{1}{2}}S_m^{-\frac{1}{2}}(S+A^T) =  S_m^{-\frac{1}{2}}(S_m^{-\frac{1}{2}}(S+A^T)S_m^{-\frac{1}{2}})S_m^{\frac{1}{2}}.
\end{align*}
We use the fact that $S^{-1}_m$ is positive definite for construction. So there exists a unique square root matrix $S_m^{-1/2}$ that is symmetric. Then the matrix $S^{-1}_m (S+A^T)$ is similar to $S_m^{-1/2}(S+A^T)S_m^{-1/2}$. For every vector $u \in \Real^{n \times d}$:
\begin{align*}
u^T S_m^{-1/2}(S+A^T)S_m^{-1/2} u = z (S + A^T) z \ge 0,
\end{align*}
where $z = u^T S_m^{-1/2} = S_m^{-1/2} u$ because $S_m^{-1/2}$ is symmetric.
Using the same reasoning it is shown that if $S \prec 0$ then $S^{-1}_m (S^T + A^T) \prec 0$.
\end{proof}

\reqth*
\begin{proof}
\textbf{D1} \algname~has to be compatible with Potential game dynamics if the game is a Potential game: $\xi^T S^{-1}_m \nabla \phi > 0$ and $\xi (A^{-1})^T  \nabla \phi > 0$. We notice that $\nabla \phi = \xi$ and that $A = 0$ because the game is a Potential game. $\xi^T S^{-1}_m \xi > 0$ for every $\xi \ne \mathbf{0}$ since $S^{-1}_m$ is positive definite for construction.

\textbf{D2} \algname~has to be compatible with Hamiltonian game dynamics if the game is a Hamiltonian game: $\xi S^{-1}_m \nabla \Ham > 0$ and $\xi (A^{-1})^T  \nabla \Ham > 0$. We know that $\nabla \Ham = (S^T+A^T) \xi$ and that $S = 0$ because the game is a Hamiltonian game. Then $\xi^T (A^{-1})^T  (A^T) \xi =  \nrm{\xi}^2$.

\textbf{D3} \algname~has to be attracted to symmetric stable fixed points. It means that if $S+A$ is positive definite, and so, $\xi^T (S+A) \xi \ge 0$. Then $S \succeq 0$. From Lemma~\ref{L:cosine} we know that also $\xi^T S^{-1}_m (S+A^T) \xi \ge 0$ so $\cos_{\nu_S} \ge 0$. The update rule take the $\max(\cos_{\nu_S}, \cos_{\nu_A})$ that from the previous consideration is always positive. 

\textbf{D4} \algname~has to be repelled by unstable symmetric fixed points. It means that if $S+A$ is negative definite, and so, $\xi^T (S+A) \xi \ge 0$. Then $S \prec 0$.From Lemma~\ref{L:cosine} we know that also $\xi^T S^{-1}_m (S+A^T) \xi \ge 0$ so $\cos_{\nu_S} < 0$. The update rule take the $\min(\cos_{\nu_S}, \cos_{\nu_A})$ that from the previous consideration is always strict negative.
\end{proof}

\fixedpoint*
\begin{proof}
Suppose that $\vtheta^*$ is not a fixed point, so $\xi(\vtheta^*) \ne 0$. The process is stopped in $\vtheta^*$ if and only if $S^{-1}_m \xi = \mathbf{0}$ and $A^{-1} \xi = \mathbf{0}$, because if $\xi^T A^{-1}_m = \mathbf{0}$ and $\xi^T S^{-1}_m \ne \mathbf{0}$ we always take $- S^{-1}_m \xi$ as update since $\nrm{\cos_{\nu_S}} \ge \nrm{\cos_A}$. $\xi^T S^{-1}_m = \mathbf{0}$ only if $\xi = \mathbf{0}$ because $S^{-1}_m$ is positive definite by construction, then we contradict the hypothesis.

We have to mention that with this lemma we prove only that the convergence points of the game are fixed points, i.e. $\xi = \mathbf{0}$.
\end{proof}

\generalsum*
\begin{proof}
Since $\xi$ is differentiable by assumption we can write:
\begin{equation*}
\xi(\vtheta)-\xi(\vtheta^*) = \int_0^1[\Hes(\vtheta_n + t(\vtheta^*-\vtheta_n))] (\vtheta^* - \vtheta_n) dt.
\end{equation*}
So we have:
\begin{align*}
\vtheta_{1} - \vtheta^* &= \vtheta_{0} - S(x)^{-1} \xi(\vtheta) - \vtheta^* \\
&= \vtheta_{0} + S(x)^{-1} (\xi(\vtheta)-\xi(\vtheta^*)) - \vtheta^*\\
&=  \vtheta_{0} - \vtheta^* S(x)^{-1}\int_0^1\left(\Hes(\vtheta_0 + t(\vtheta^*-\vtheta_0))\right) (\vtheta^* - \vtheta_0) dt \\
&= S(\vtheta)^{-1} \int_0^1 \left(\Hes(\vtheta_0 + t(\vtheta^*-\vtheta_0))-S(\vtheta_0)\right) (\vtheta^* - \vtheta_0) dt.
\end{align*}
Taking the norm and supposing that the current update is with $S(\vtheta)^{-1}$
\begin{align*}
\norm{\vtheta_{1} - \vtheta^*} &\le \norm{S(\vtheta)^{-1}} \int_0^1 \norm{\Hes(\vtheta_0 + t(\vtheta^*-\vtheta_0))-S(\vtheta_0)} \norm{\vtheta^* - \vtheta_n} dt \\
&= \norm{S(\vtheta)^{-1}} \int_0^1 \norm{\Hes(\vtheta_0 + t(\vtheta^*-\vtheta_0))-\Hes(\vtheta_0)+A(\vtheta_0)} \norm{\vtheta^* - \vtheta_0} dt \\
&\le \norm{S(\vtheta)^{-1}} \left(\int_0^1\norm{\Hes(\vtheta_0 + t(\vtheta^*-\vtheta_0))-\Hes(\vtheta_0)} \norm{\vtheta^* - \vtheta_0} dt +\right. \\
&\left.\qquad\qquad\qquad +  \int_0^1\norm{[A(\vtheta_0)]} \norm{\vtheta^* - \vtheta_0} dt\right) \\
&\le  \norm{S(\vtheta)^{-1}} \norm{\vtheta^* - \vtheta_0} \left(\int_0^1 L t \norm{\vtheta^* - \vtheta_0} dt+ \int_0^1 M_A dt\right) \\
&\le N_S L \norm{\vtheta^* - \vtheta_0}^2 + N_A M_A \norm{\vtheta^* - \vtheta_0} \le (N_S L + N_A M_A) \norm{\vtheta^* - \vtheta_0}.
\end{align*}
If we suppose that $ \norm{\vtheta^* - \vtheta_0} \le \frac{\alpha}{ (N_S L + N_A M_A)}$ with $\alpha \in (0,1)$, then:
\begin{equation*}
\norm{\vtheta^* - \vtheta_{1}} \le \alpha \norm{\vtheta^* - \vtheta_0}.
\end{equation*}
Then by induction:
\begin{equation*}
\norm{\vtheta^* - \vtheta_{n}} \le \alpha \norm{\vtheta^* - \vtheta_{n-1}}.
\end{equation*}
Hence, $\lim_{n \rightarrow \infty} \norm{\vtheta^* - \vtheta_{n}} = 0$, so if we take $\epsilon \le \frac{\alpha}{ \max{(N_S L + N_A M_A, N_A L + N_S M_S)}}$ the sequence converges. 

\end{proof}

\newpage
\section{Experimental results} 
\label{app:experiment}
In this appendix, we report some additional experimental results on the two games Matching Pennies and Rock--Paper--Scissors. We compare \algname~against $6$ baselines (Consensus~\cite{mescheder2017numerics}, LOLA~\cite{foerster2018learning}, IGAPP~\citep{zhang2010multi}, SOS~\cite{letcher2018stable}, CGD~\citep{schafer2019competitive}), SGA~\cite{balduzzi2018mechanics}. We settled the hyperparameter $a, b$ of SOS as in the original paper $a=0.5$ and $b=0.1$ (even if the experiments in the original paper are on another game). The parameter $m$ of the PT-inverse is settled to $0.03$ in all the experiments. We conduct experiments with a linear parametrization of the policy and a Boltzmann parametrization of the policy. In the Boltzmann experiments we show results with exact gradients and estimated gradients. We estimated the gradients with batch size equals to $300$ and horizon $1$.
Below we report the payoff matrices of Matching Pennies (Table~\ref{MPpayoff}), Rock--Paper--Scissors (Table~\ref{RPSpayoff}), and Dilemma (Table~\ref{Dilemmapayoff}).

\begin{table*}[t]
\parbox{.5\linewidth}{
\centering
\begin{tabular}{|c|cc|cc|}
\hline
     & \multicolumn{2}{c|}{Head} & \multicolumn{2}{c|}{Tail} \\ \hline
Head & 1           & -1          & -1          & 1           \\ \hline
Tail & -1          & 1           & 1           & -1          \\ \hline
\end{tabular}
\caption{Matching pennies payoffs.}
\label{MPpayoff}
}
%\hspace{1cm}
\parbox{.5\linewidth}{
\centering
\begin{tabular}{|c|cc|cc|}
\hline
     & \multicolumn{2}{c|}{Head} & \multicolumn{2}{c|}{Tail} \\ \hline
Head & -1           & -1          & -3          & 0          \\ \hline
Tail & 0          & -3           & -2          & -2          \\ \hline
\end{tabular}
\caption{Dilemma payoffs.}
\label{Dilemmapayoff}
}
\end{table*}
\begin{table*}[t]
\centering
\centering
\begin{tabular}{|c|cc|cc|cc|}
\hline
         & \multicolumn{2}{c|}{Rock} & \multicolumn{2}{c|}{Paper} & \multicolumn{2}{l|}{Scissors} \\ \hline
Rock     & 0           & 0           & -1           & -1          & 1             & -1            \\ \hline
Paper    & 1           & -1          & 0            & 0           & -1            & 1             \\ \hline
Scissors & -1          & 1           & 1            & -1          & 0             & 0             \\ \hline
\end{tabular}
\caption{RPS payoffs.}
\label{RPSpayoff}
\end{table*}
In Figure~\ref{consensusmore} we report the results for Consensus with $2000$ iterations and learning rate $0.1$ in order to show that the algorithm converges.
\begin{figure}[t!]
\centering
\includegraphics[scale=1]{./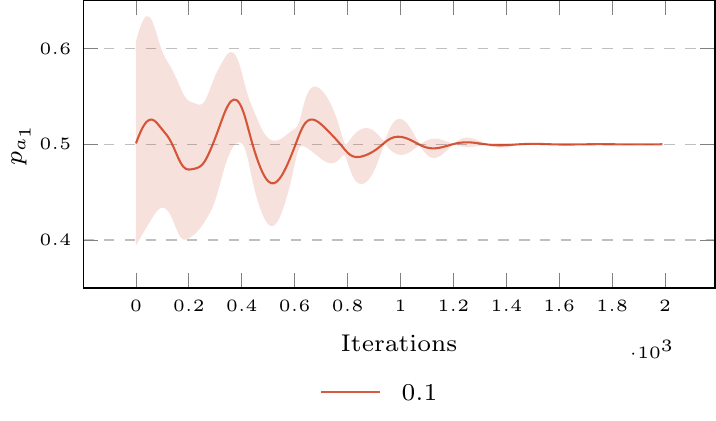}
\caption{Agent 1's probabily to perform action $1$ in Matching Pennies with Consensus. Exact gradients. $20$ repetitions. Learning rate $0.1$. 95\% c.i.}
\label{consensusmore}
\end{figure}
\newpage
\subsection{Matching pennies linear parametrization}
In this section we reported the behavior of \algname~and the other benchmarks in a linear parametrization of Matching Pennies game. As you can see \algname, as CGD, converges to the Nash Equilibrium in only one step. We show the best results searching between learning rates $1.0, 0.5, 0.1, 0.05$.
\begin{figure}[h]
\begin{minipage}{0.5\textwidth}
\centering
\includegraphics[scale=.8]{./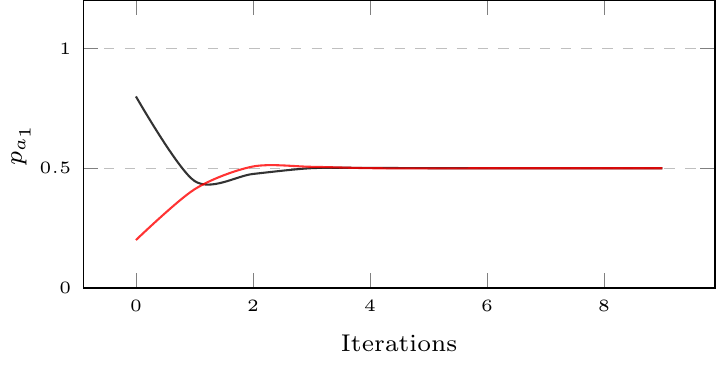}
\end{minipage}
\begin{minipage}{0.5\textwidth}
\centering
\includegraphics[scale=.8]{./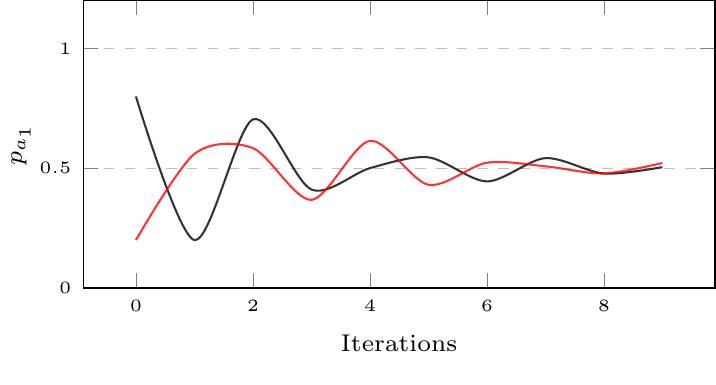}
\end{minipage}\\
\begin{minipage}{0.5\textwidth}
\centering
\includegraphics[scale=.8]{./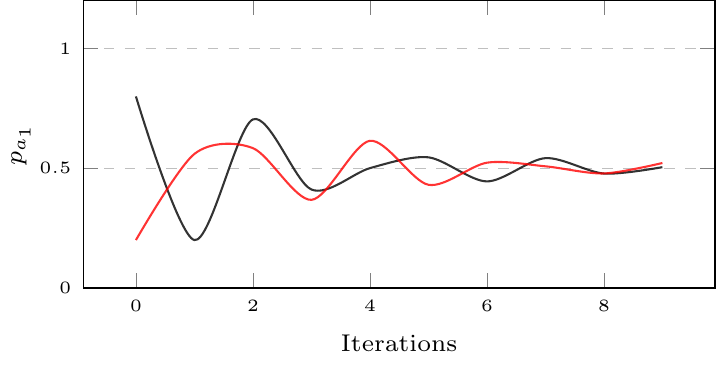}
\end{minipage}
\begin{minipage}{0.5\textwidth}
\centering
\includegraphics[scale=.8]{./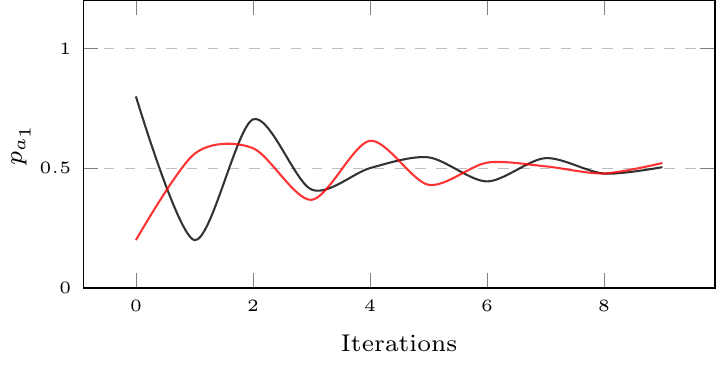}
\end{minipage}
\begin{minipage}{0.5\textwidth}
\centering
\includegraphics[scale=.8]{./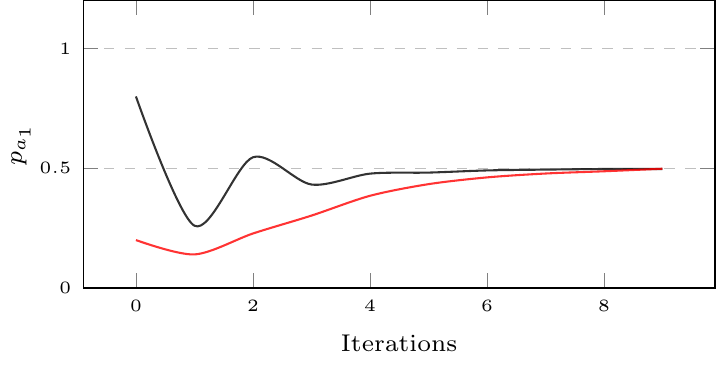}
\end{minipage}
\begin{minipage}{0.5\textwidth}
\centering
\includegraphics[scale=.8]{./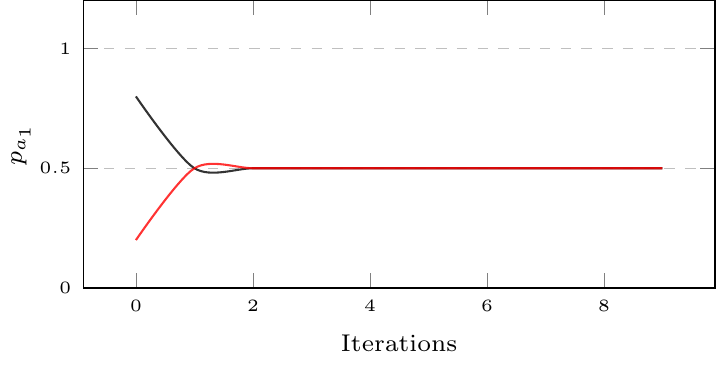}
\end{minipage}
\begin{minipage}{0.5\textwidth}
\centering
\includegraphics[scale=.8]{./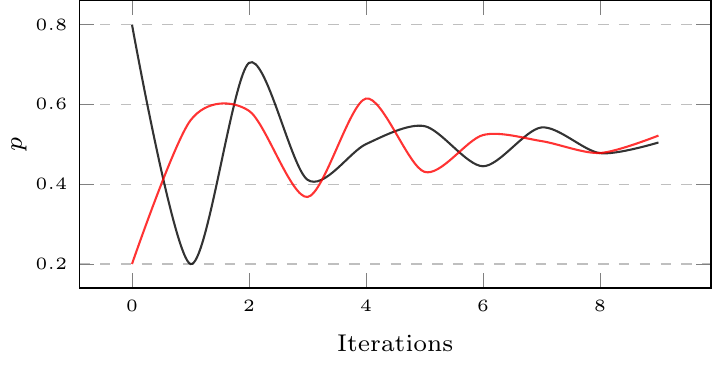}
\end{minipage}
\caption{Agents' probabily to perform action $1$ in Matching Pennies. The initial probabilities are settled to $0.8$ and $0.2$. From top right to bottom left: CGD, consensus, LOLA, IGAPP, SOS, \algname, and SGA.}
\label{matchingpennies1}
\end{figure}

\newpage

\subsection{Matching pennies with exact gradients}
In this section we reported the experiments on Matching Pennies game with $20$ different starting probabilities, sampled from a Normal distribution with mean $0$ and standard deviation $1$. Figure~\ref{matchingpennies1} shows how all the algorithms succeed in converging to the Nash Equilibrium, but NOHD converges in less than $100$ iterations.
\begin{figure}[h!]
\begin{minipage}{0.5\textwidth}
\centering
\includegraphics[scale=.8]{./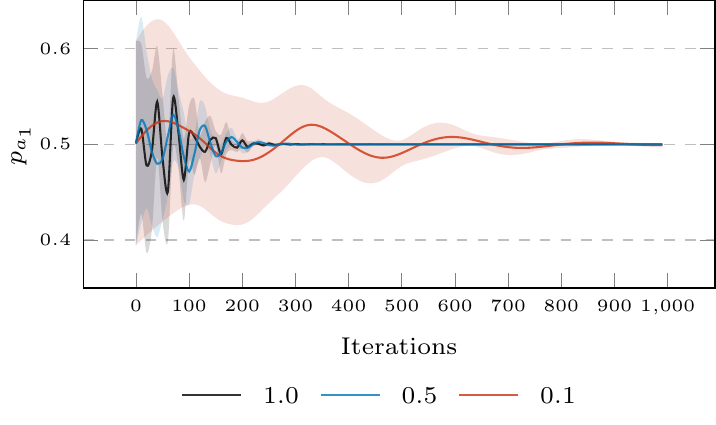}
\end{minipage}
\begin{minipage}{0.5\textwidth}
\centering
\includegraphics[scale=.8]{./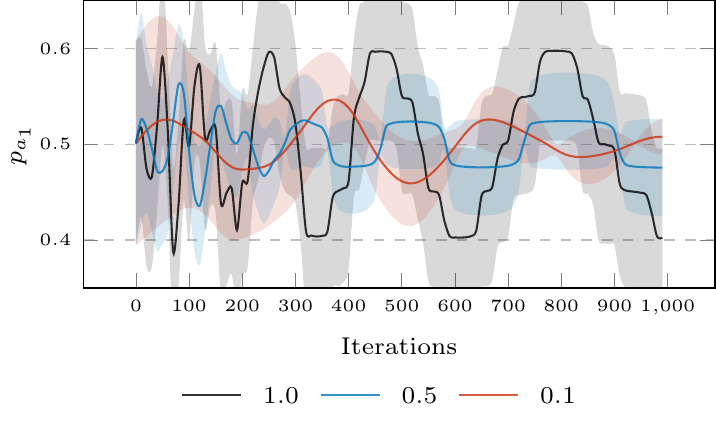}
\end{minipage}\\
\begin{minipage}{0.5\textwidth}
\centering
\includegraphics[scale=.8]{./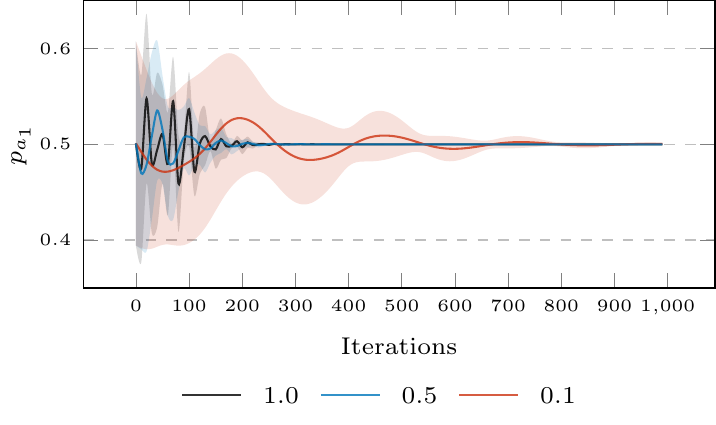}
\end{minipage}
\begin{minipage}{0.5\textwidth}
\centering
\includegraphics[scale=.8]{./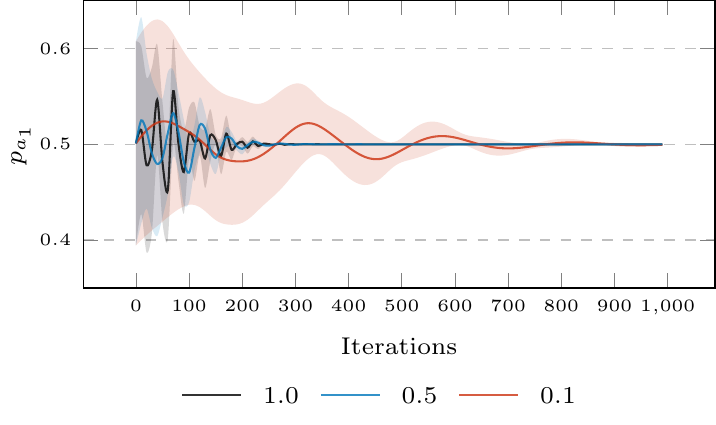}
\end{minipage}
\begin{minipage}{0.5\textwidth}
\centering
\includegraphics[scale=.8]{./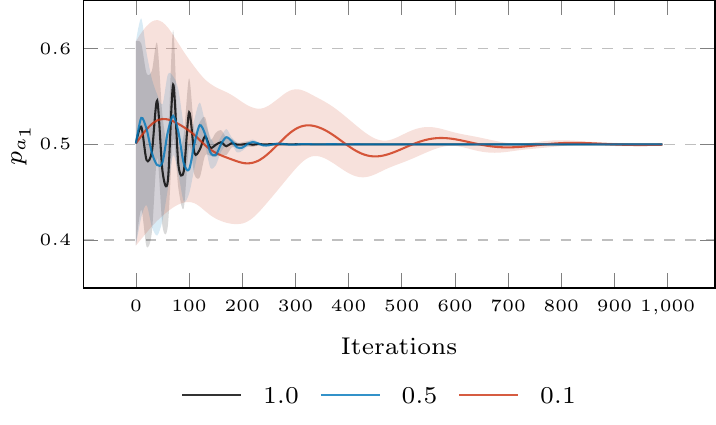}
\end{minipage}
\begin{minipage}{0.5\textwidth}
\centering
\includegraphics[scale=.8]{./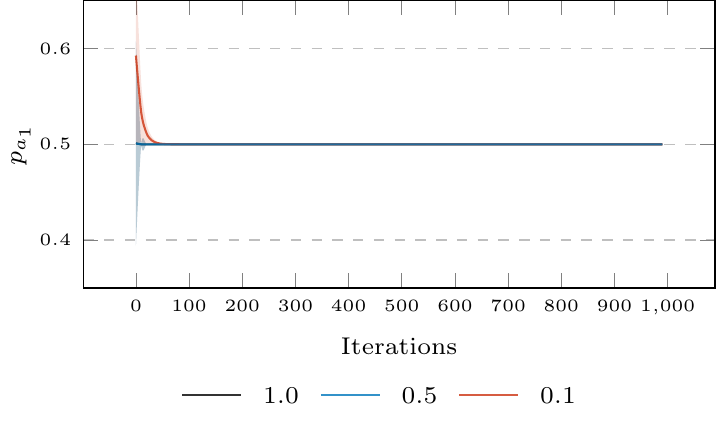}
\end{minipage}
\begin{minipage}{0.5\textwidth}
\centering
\includegraphics[scale=.8]{./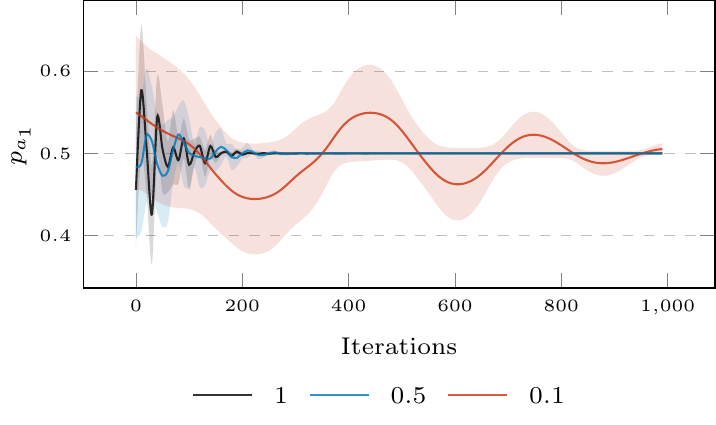}
\end{minipage}
\caption{Agent 1's probabily to perform action $1$ in Matching Pennies. From top right to bottom left: CGD, consensus, LOLA, IGAPP, SOS, \algname, and SGA. True gradients. $20$ random sampled initial probabilities. Learning rate $0.1, 0.5, 1.0$. 95\% c.i.}
\label{matchingpennies1}
\end{figure}

\newpage
%%%%%% APPROXIMATED %%%%%%%%%%%%
\subsection{Matching pennies with approximated gradients}
In this section, we reported the experiments on the Matching Pennies game with $20$ different starting probabilities, sampled from a Normal distribution with mean $0$ and standard deviation $1$. In this case we estimate the gradient and the Jacobian using $300$ sampled trajectories. Figure~\ref{matchingpenniesapx1} shows how all the algorithms succeed in converging to the Nash Equilibrium, but \algname~converges in less than $100$.

\begin{figure}[h!]
\begin{minipage}{0.5\textwidth}
\centering
\includegraphics[scale=.8]{./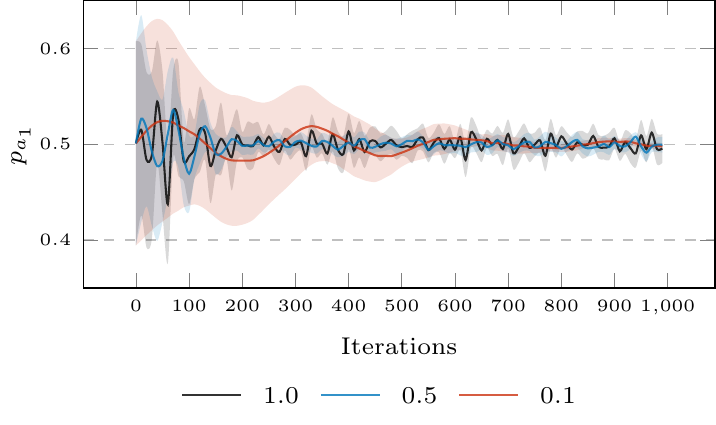}
\end{minipage}
\begin{minipage}{0.5\textwidth}
\centering
\includegraphics[scale=.8]{./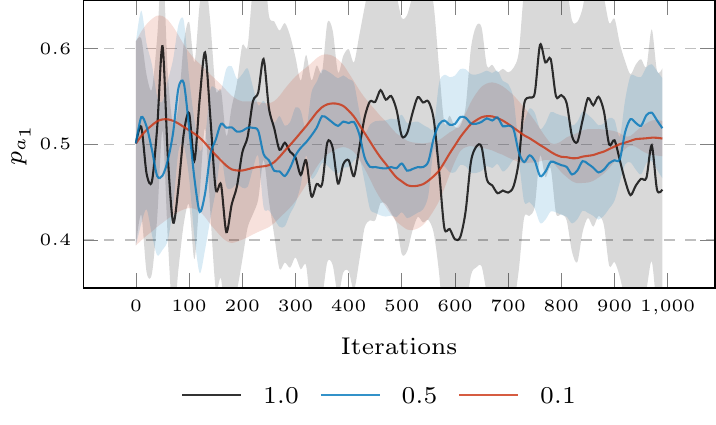}
\end{minipage}\\
\begin{minipage}{0.5\textwidth}
\centering
\includegraphics[scale=.8]{./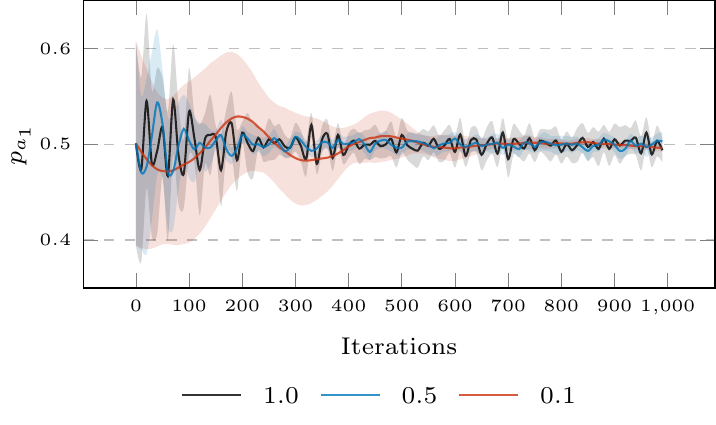}
\end{minipage}
\begin{minipage}{0.5\textwidth}
\centering
\includegraphics[scale=.8]{./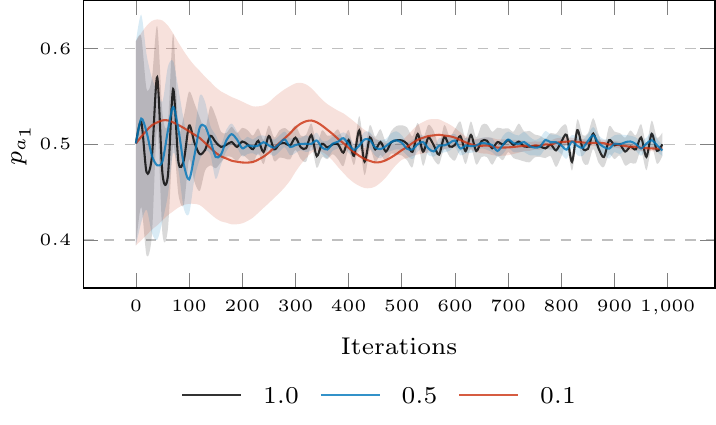}
\end{minipage}
\begin{minipage}{0.5\textwidth}
\centering
\includegraphics[scale=.8]{./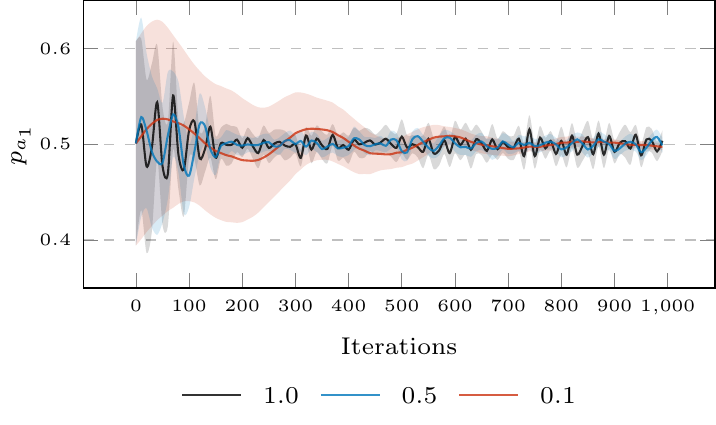}
\end{minipage}
\begin{minipage}{0.5\textwidth}
\centering
\includegraphics[scale=.8]{./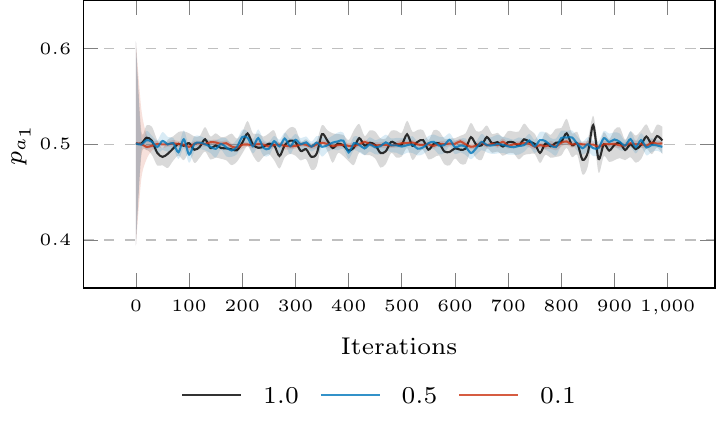}
\end{minipage}
\begin{minipage}{0.5\textwidth}
\centering
\includegraphics[scale=.8]{./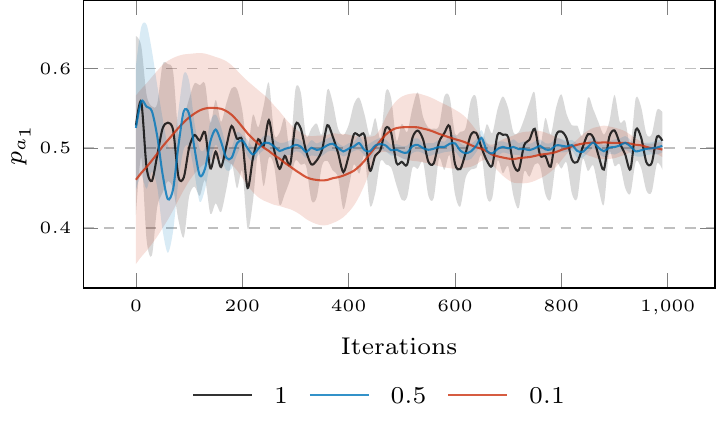}
\end{minipage}
\caption{Agent 1's probabily to perform action $1$ in Matching Pennies. From top right to bottom left: CGD, consensus, LOLA, IGAPP, SOS, \algname, and SGA. Estimated gradients batch size $300$. $20$ random sampled initial probabilities. Learning rate $0.1, 0.5, 1.0$. 95\% c.i.}
\label{matchingpenniesapx1}
\end{figure}

\newpage

\subsection{Rock paper scissors with exact gradients}
In this section, we reported the experiments on Matching Pennies game with $20$ different starting probabilities, sampled from a Normal distribution with mean $0$ and standard deviation $0.5$. For every algorithm we perform experiments with learning rates $0.1, 0.5, 1$. Figure~\ref{rps1} shows how all the algorithms succeed in converging to the Nash Equilibrium, but \algname~converges in less than $100$ iterations.
%%%%%%%%RPS%%%%%%%%%%
\begin{figure}[h!]
\begin{minipage}{0.5\textwidth}
\centering
\includegraphics[scale=.8]{./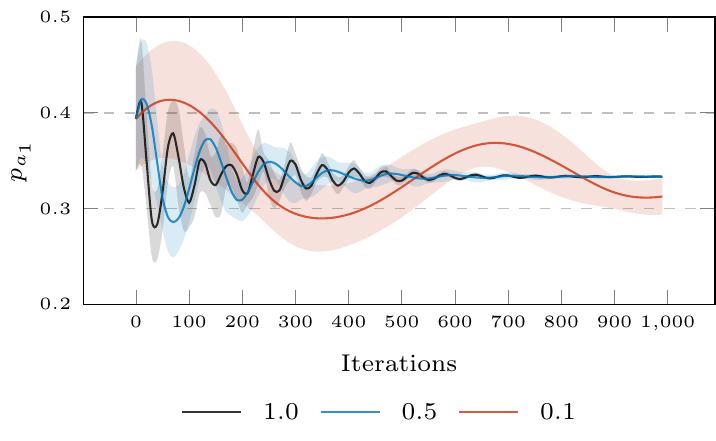}
\end{minipage}
\begin{minipage}{0.5\textwidth}
\centering
\includegraphics[scale=.8]{./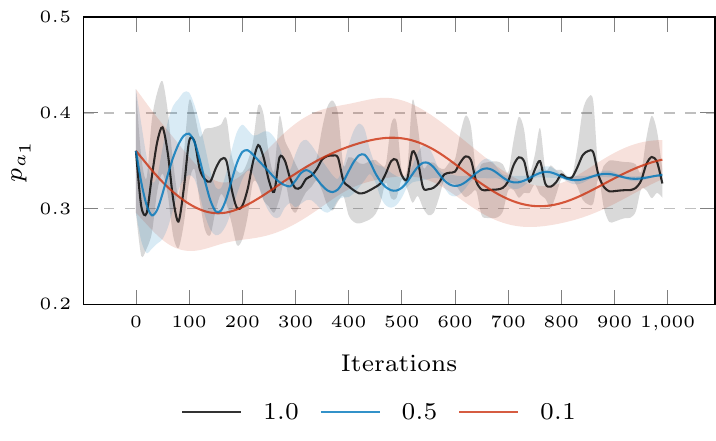}
\end{minipage}\\
\begin{minipage}{0.5\textwidth}
\centering
\includegraphics[scale=.8]{./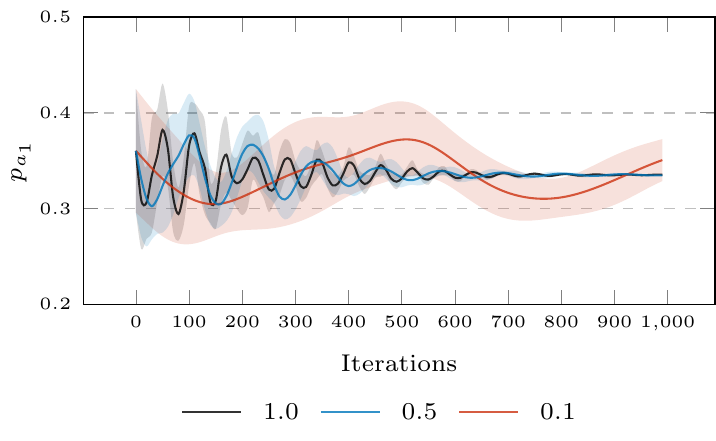}
\end{minipage}
\begin{minipage}{0.5\textwidth}
\centering
\includegraphics[scale=.8]{./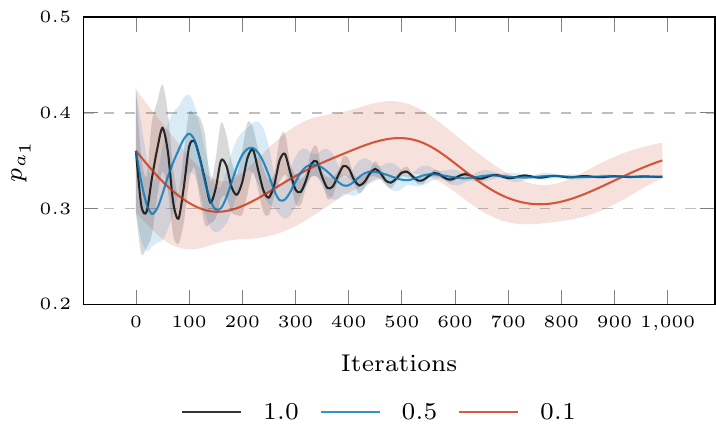}
\end{minipage}
\begin{minipage}{0.5\textwidth}
\centering
\includegraphics[scale=.8]{./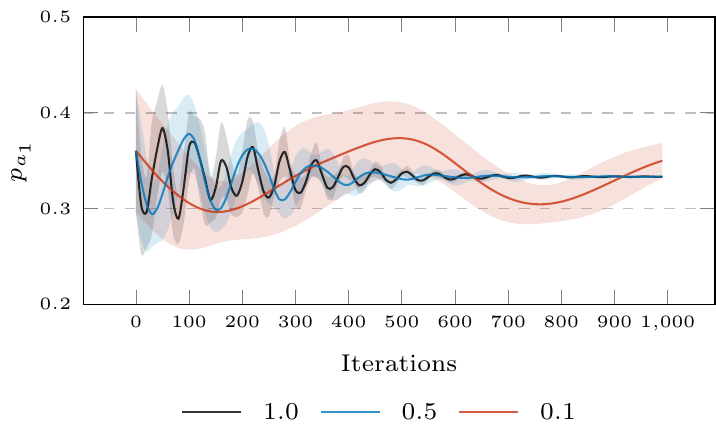}
\end{minipage}
\begin{minipage}{0.5\textwidth}
\centering
\includegraphics[scale=.8]{./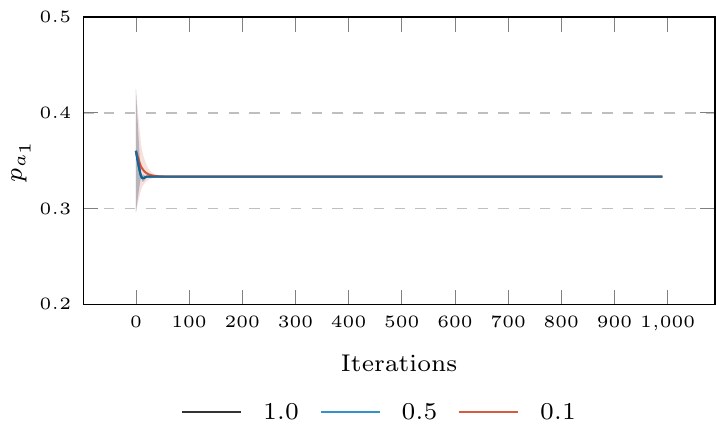}
\end{minipage}
\begin{minipage}{0.5\textwidth}
\centering
\includegraphics[scale=.8]{./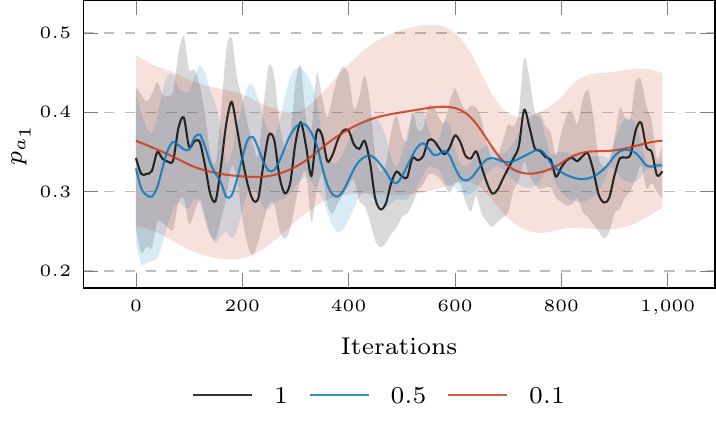}
\end{minipage}
\caption{Agent 1's probabily to perform action $1$ in Rock Paper Scissors. From top right to bottom left: CGD, consensus, LOLA, IGAPP, SOS, \algname, and SGA. Estimated gradients batch size $300$. $20$ random sampled initial probabilities. Learning rate $0.1, 0.5, 1.0$. 95\% c.i.}
\label{rps1}
\end{figure}

\newpage
%%%%%% APPROXIMATED %%%%%%%%%%%%
\subsection{Rock Paper Scissors with approximated gradients}
In this section, we reported the experiments on Rock--Paper--Scissors game with $20$ different starting probabilities, sampled from a Normal distribution with mean $0$ and standard deviation $0.5$. In this case, we estimate the gradient and the Jacobian using $300$ sampled trajectories. Every algorithms do not perform well with learning rate $1.0$ and for this reason we report results only with learning rates $0.1, 0.5$. Figure~\ref{rpsapx1} shows how all the algorithms succeed in converging to the Nash Equilibrium, but \algname converges in less than $100$.
\begin{figure}[h!]
\begin{minipage}{0.5\textwidth}
\centering
\includegraphics[scale=.8]{./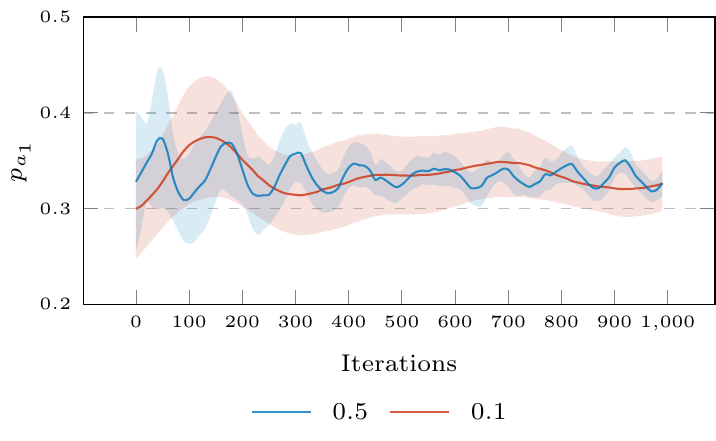}
\end{minipage}
\begin{minipage}{0.5\textwidth}
\centering
\includegraphics[scale=.8]{./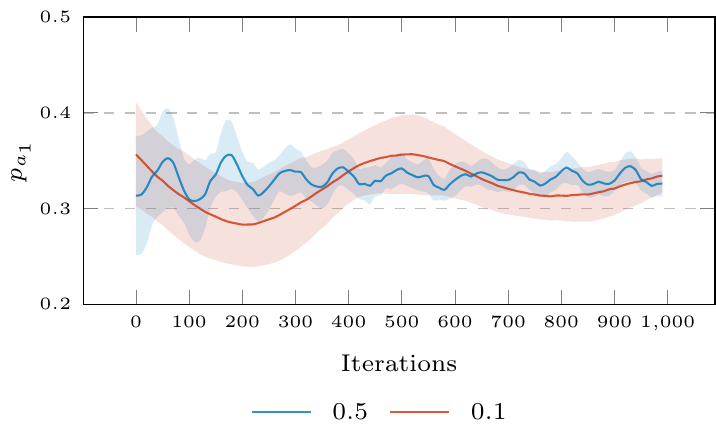}
\end{minipage}\\
\begin{minipage}{0.5\textwidth}
\centering
\includegraphics[scale=.8]{./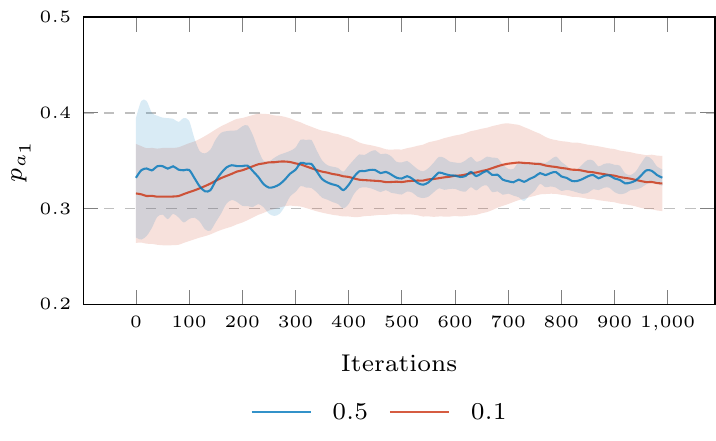}
\end{minipage}
\begin{minipage}{0.5\textwidth}
\centering
\includegraphics[scale=.8]{./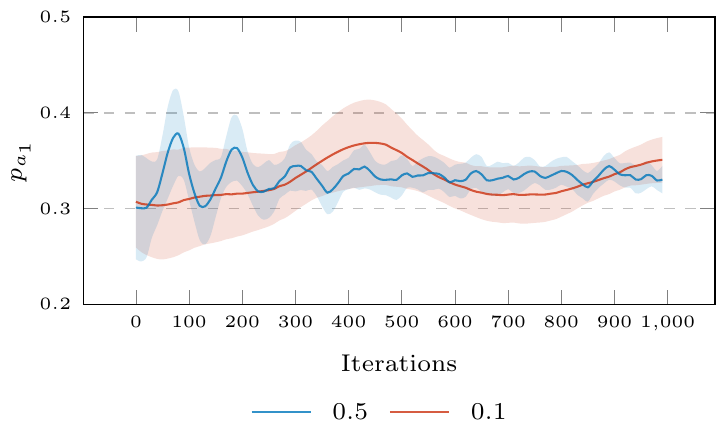}
\end{minipage}
\begin{minipage}{0.5\textwidth}
\centering
\includegraphics[scale=.8]{./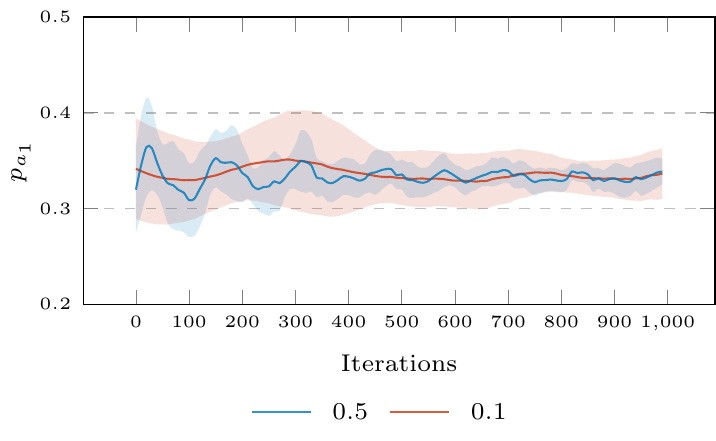}
\end{minipage}
\begin{minipage}{0.5\textwidth}
\centering
\includegraphics[scale=.8]{./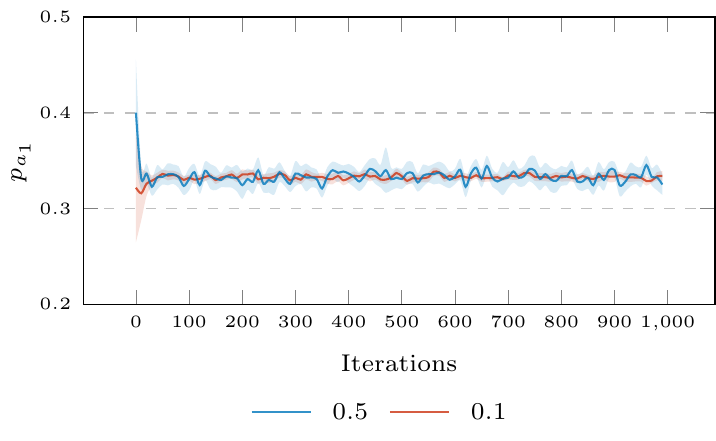}
\end{minipage}
\begin{minipage}{0.5\textwidth}
\centering
\includegraphics[scale=.8]{./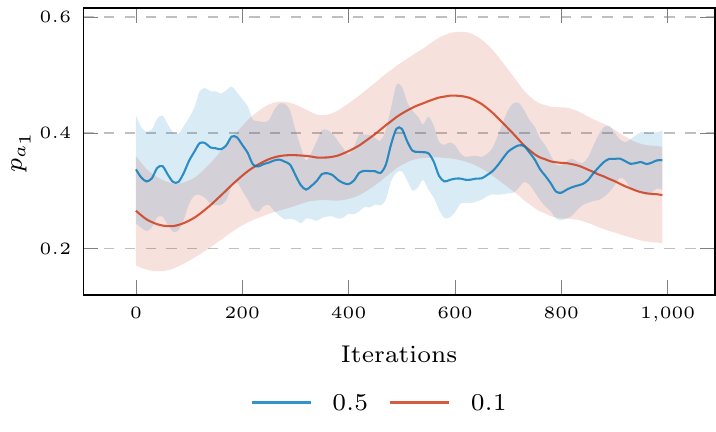}
\end{minipage}
\caption{Agent 1's probabily to perform action $1$ in Rock Paper Scissors. From top right to bottom left: CGD, consensus, LOLA, IGAPP, SOS, \algname, and SGA. Estimated gradients. $20$ random sampled initial probabilities. Learning rate $0.1, 0.5$. 95\% c.i.}
\label{rpsapx1}
\end{figure}

\newpage

\subsection{Continuous gridworlds}
In this section we report some details on the setting of gridworld experiments. In this experiment the agent is described by a gaussian policy with variance $0.1$ and mean features are respectively $72$ and $68$ radial basis function which describes the two agent position, and in the second gridworld, which agent has the ball. In the first gridworld the reward that an agent take is $0$ for every state except the goal state where is $50$. In the second is $-1$ for every state and $3$ in the goal state. The hyperparameters used for these experiments are:

\begin{itemize}
\item Learning rate: $0.01$
\item Batch size: $100$
\item Trajectory length: $30$
\item Discount factor: $0.96$
\end{itemize}

\begin{figure}
\includegraphics{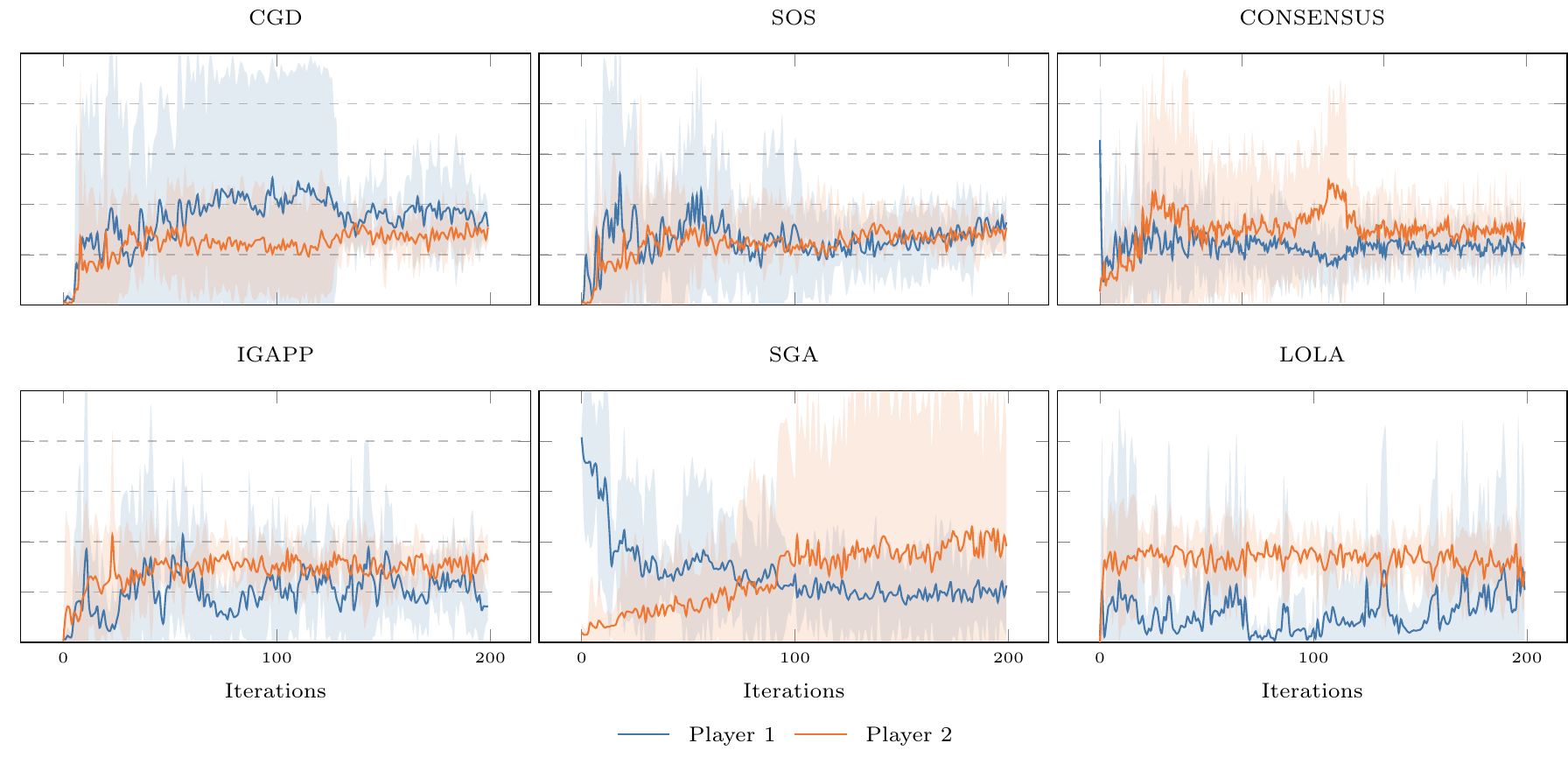}
\label{fig:nohd}
\caption{Average wins for player $1$ and player $2$ in the second gridworld environments, where the second player is always NOHD. $10$ runs, c.i.~98\%}
\end{figure}

Below we report the second experiment where the second player is always NOHD. Figure \ref{fig:nohd} shows that all algorithms converge to a stable policy against NOHD or NOHD learn a winner policy against the algorithm. 
\newpage
\subsection{Generative Adversarial Network}
\begin{figure*}[h!]
\begin{minipage}{0.07\textwidth}
\textbf{NOHD}
\end{minipage}
\begin{minipage}{0.17\textwidth}
\includegraphics[scale=.21]{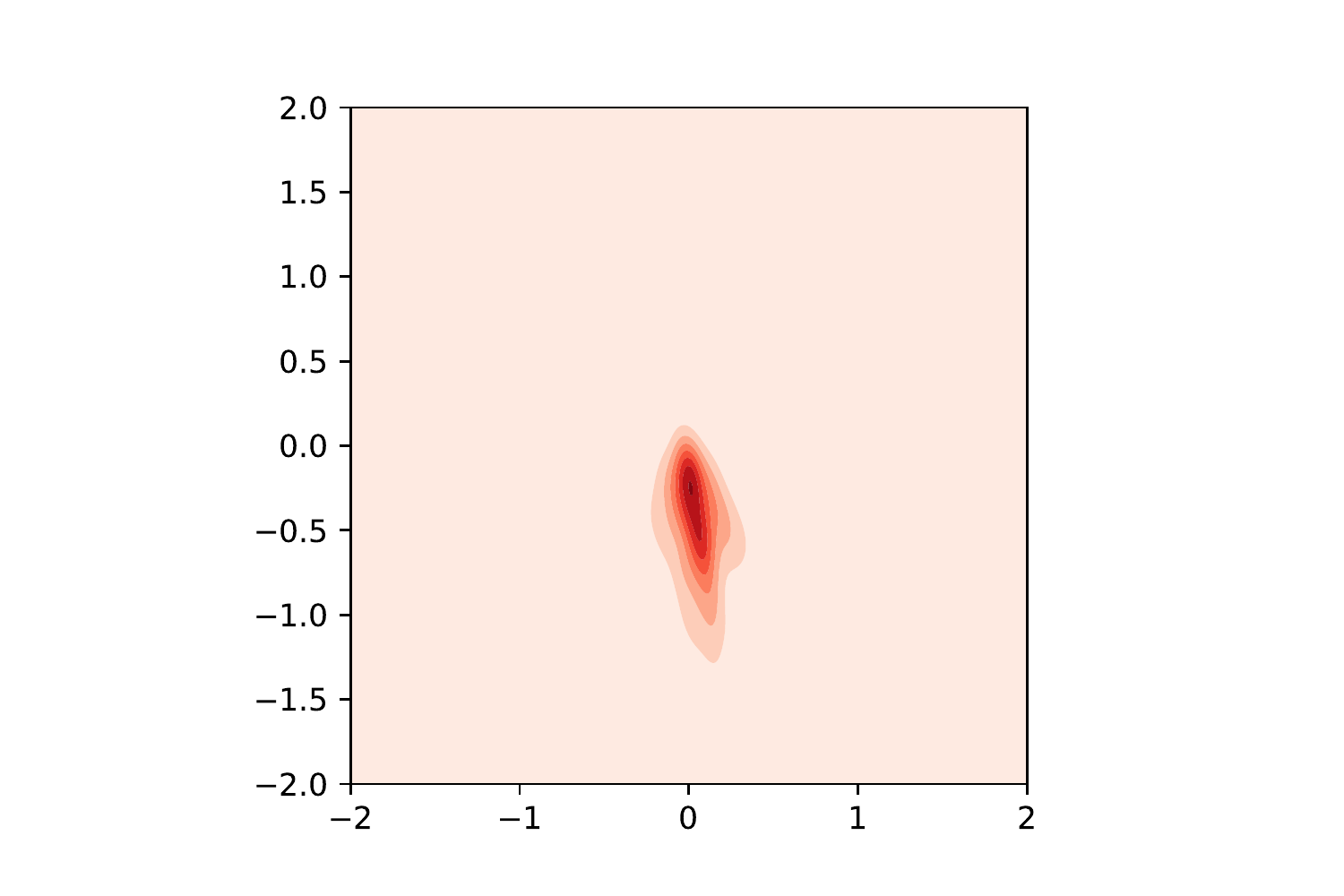}
\end{minipage}
\begin{minipage}{0.17\textwidth}
\includegraphics[scale=.21]{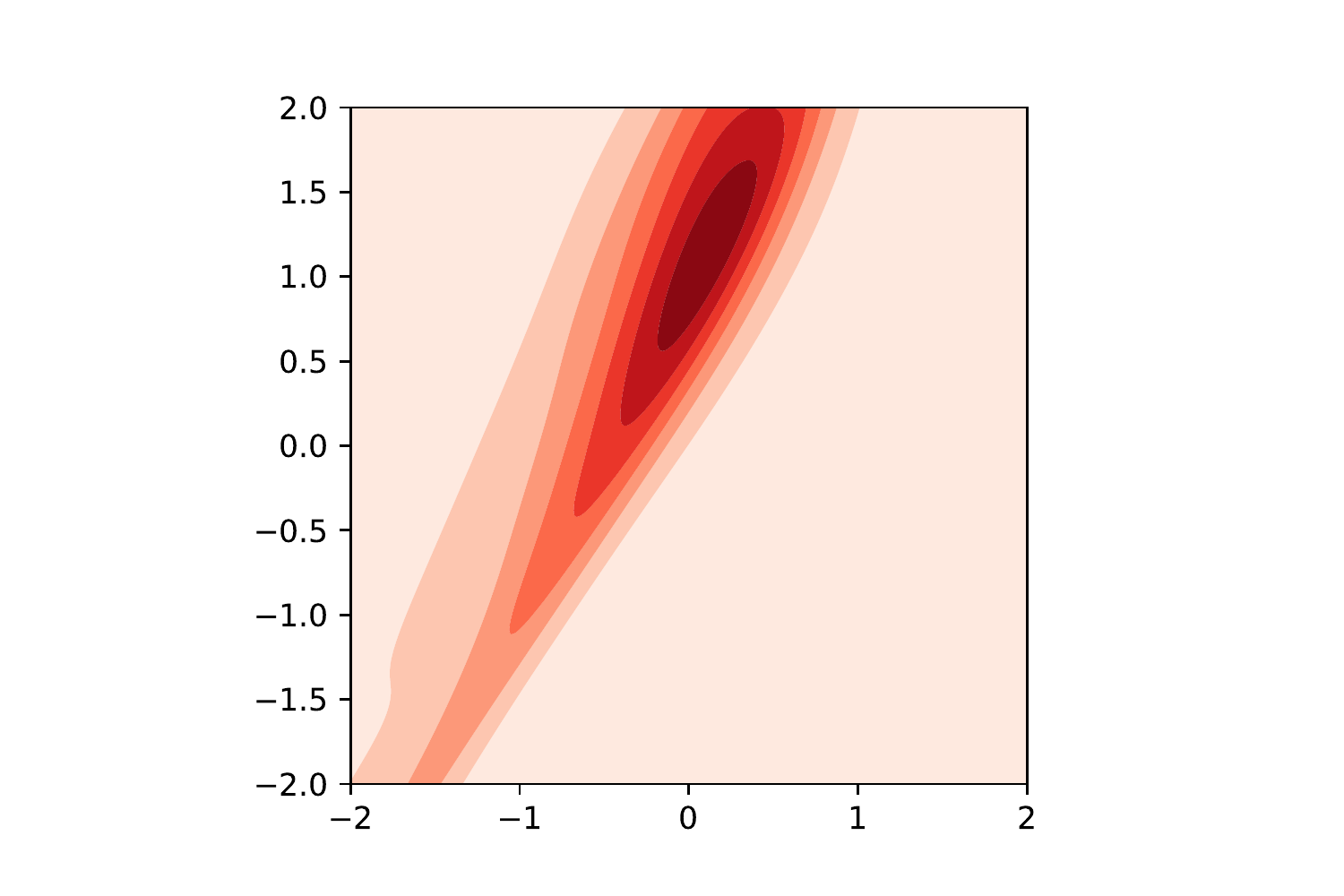}
\end{minipage}
\begin{minipage}{0.17\textwidth}
\includegraphics[scale=.21]{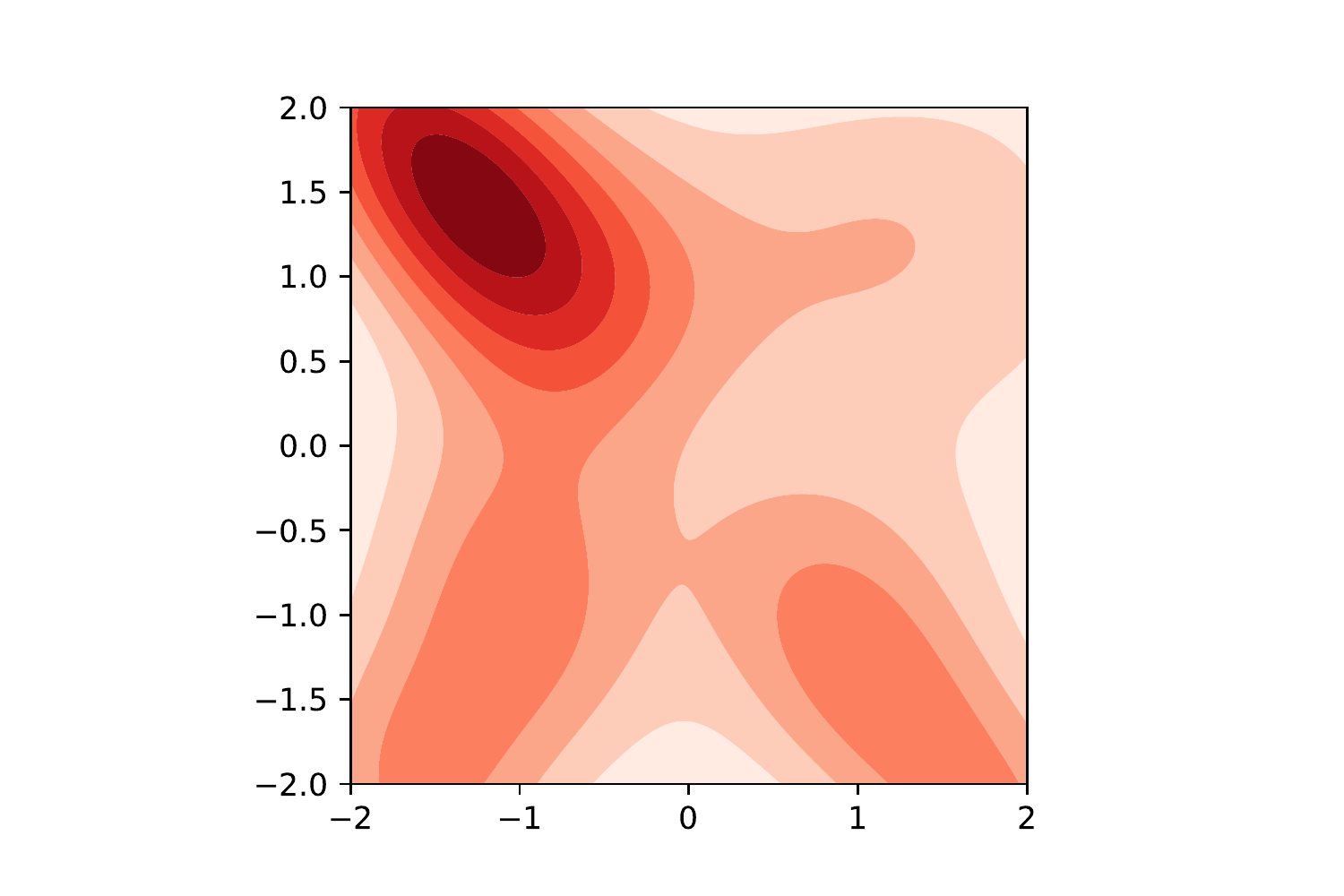}
\end{minipage}
\begin{minipage}{0.17\textwidth}
\includegraphics[scale=.21]{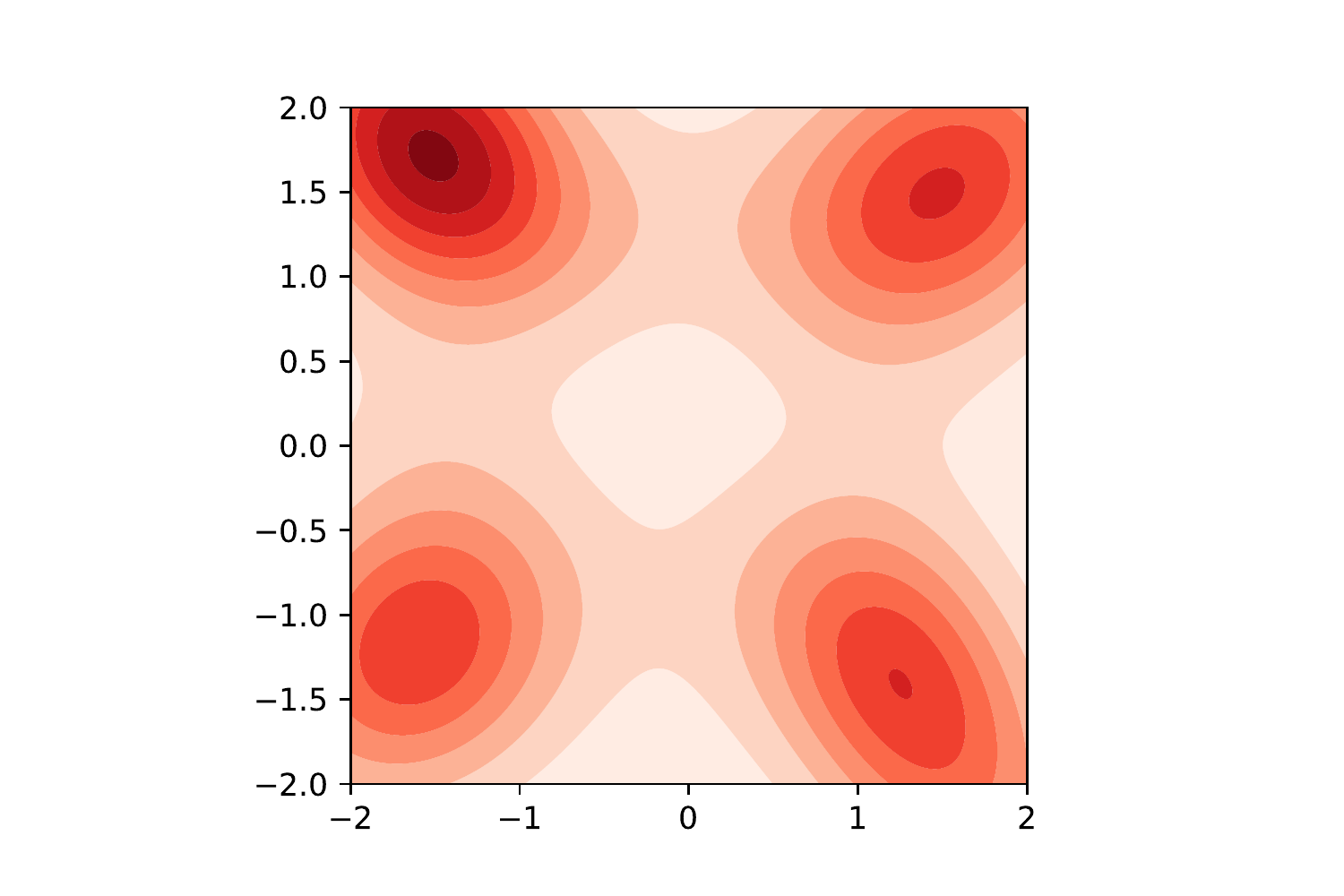}
\end{minipage}
\begin{minipage}{0.17\textwidth}
\includegraphics[scale=.21]{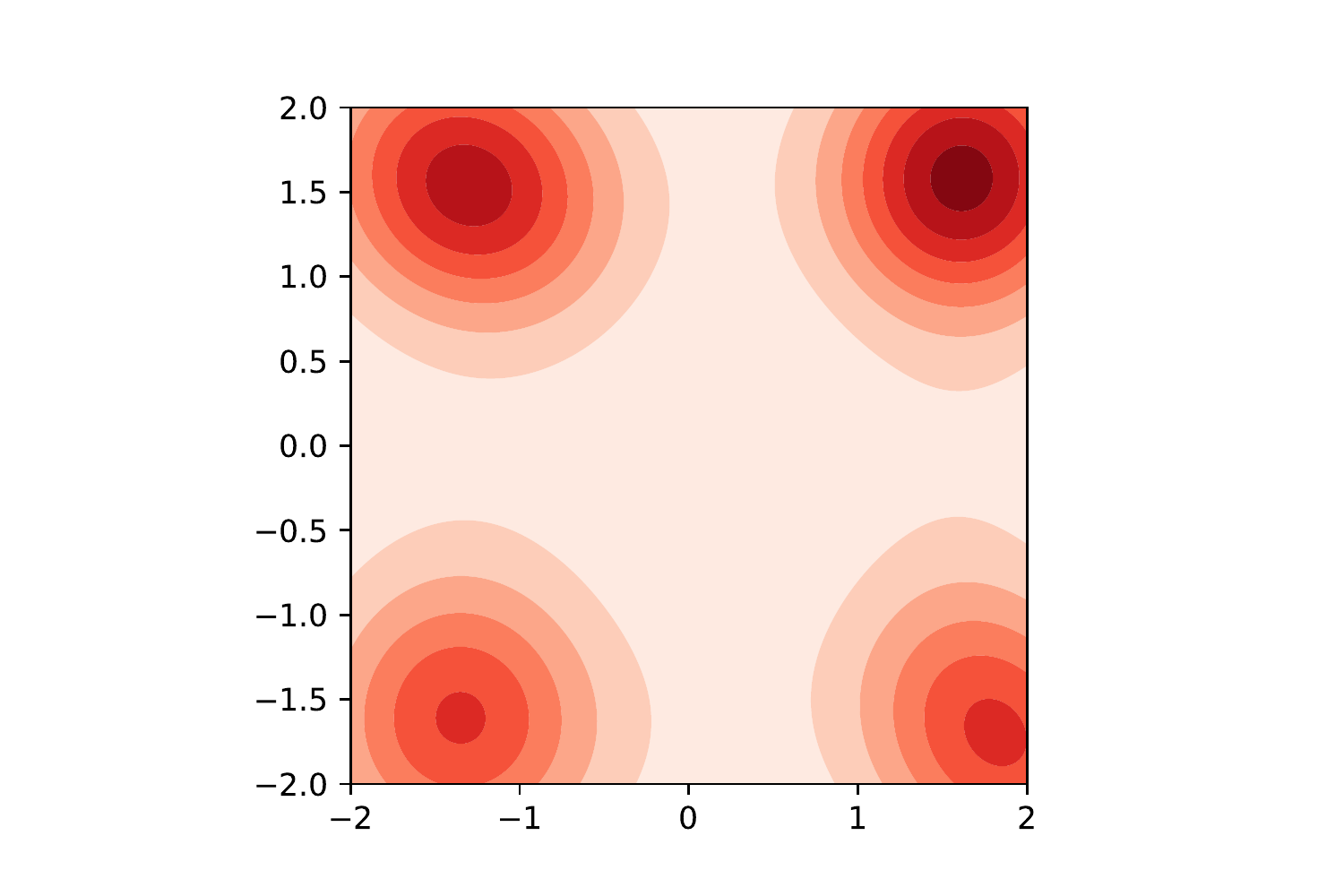}
\end{minipage}
\\
\begin{minipage}{0.07\textwidth}
\textbf{SGA}
\end{minipage}
\begin{minipage}{0.17\textwidth}
\includegraphics[scale=.21]{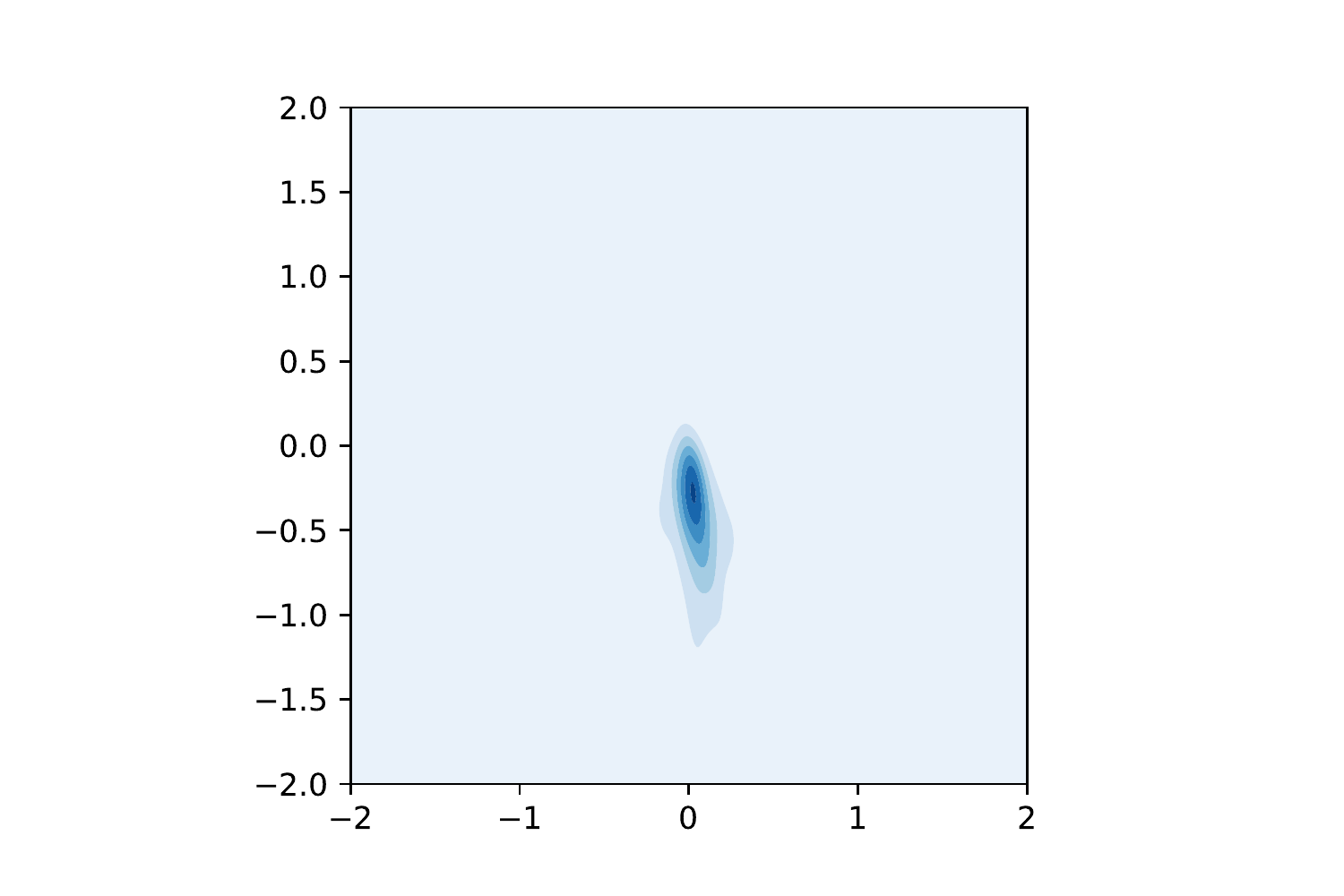}
\end{minipage}
\begin{minipage}{0.17\textwidth}
\includegraphics[scale=.21]{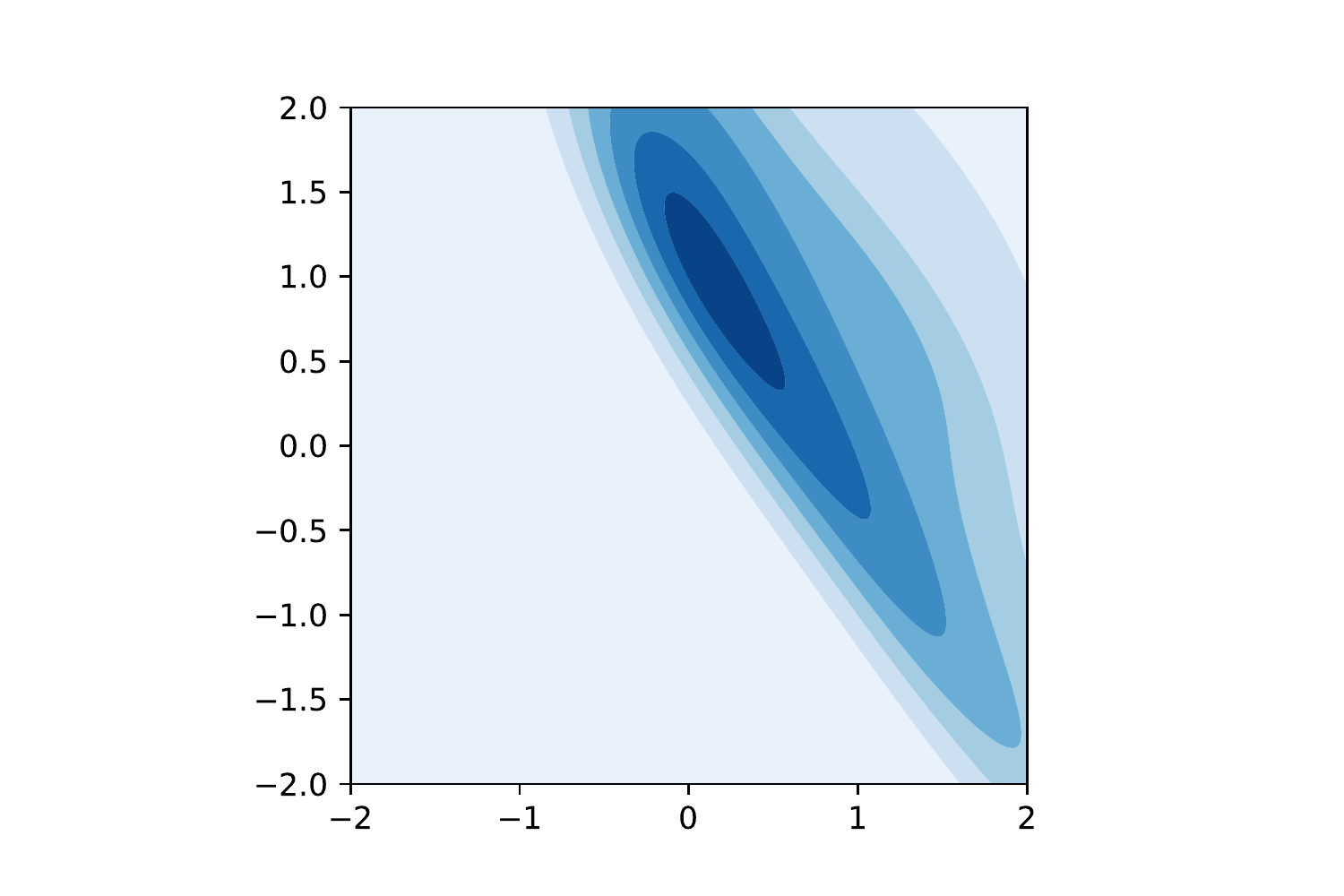}
\end{minipage}
\begin{minipage}{0.17\textwidth}
\includegraphics[scale=.21]{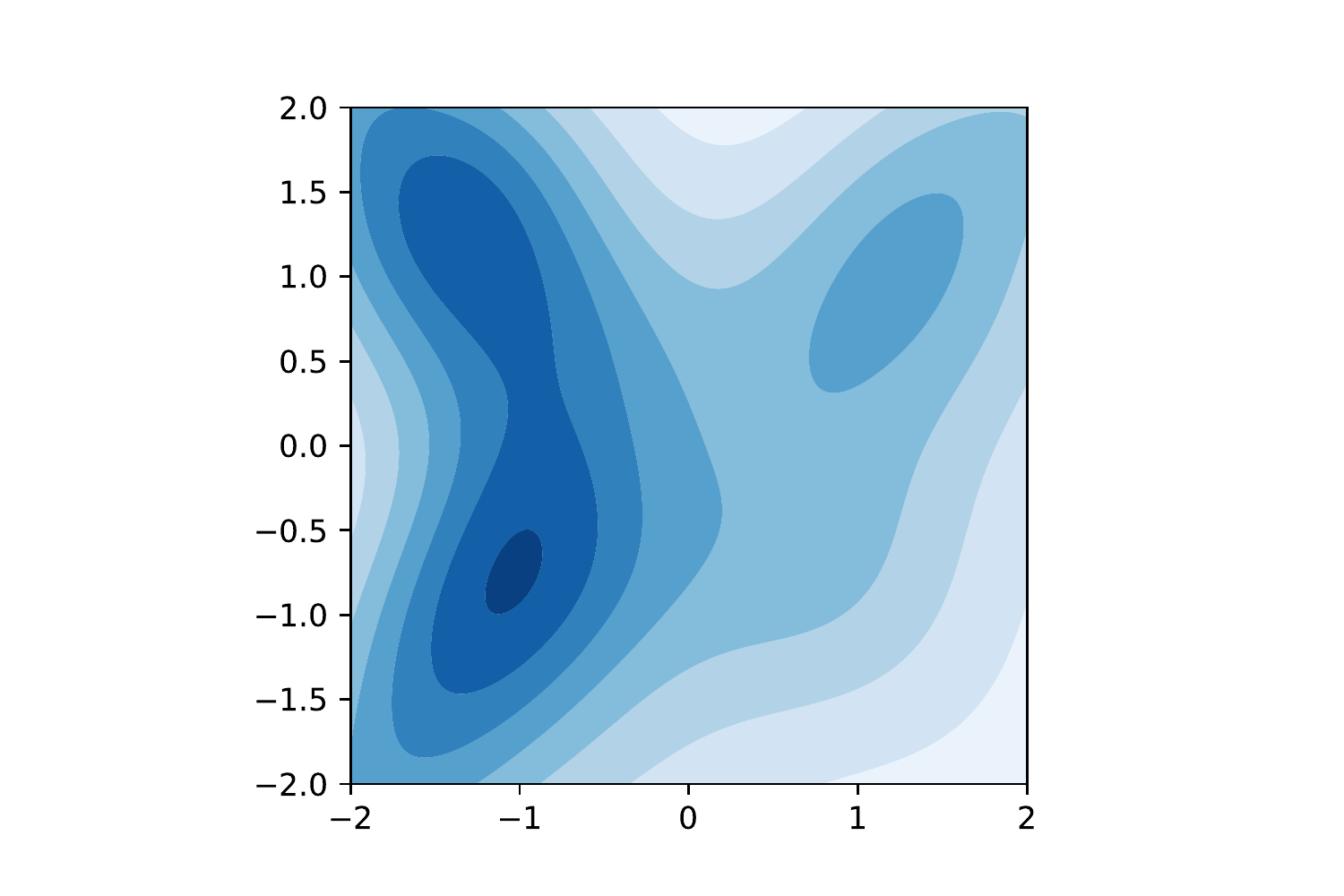}
\end{minipage}
\begin{minipage}{0.17\textwidth}
\includegraphics[scale=.21]{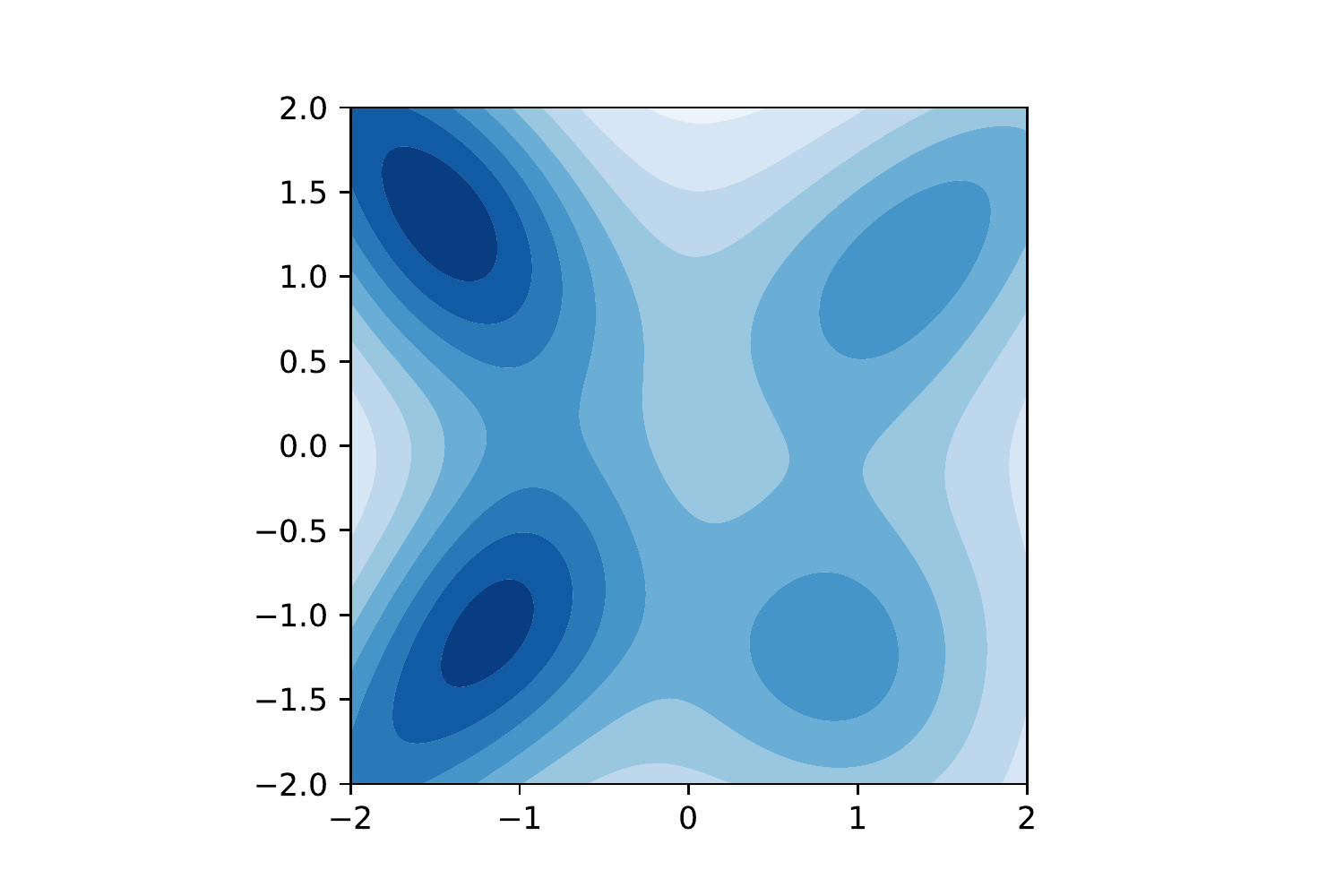}
\end{minipage}
\begin{minipage}{0.17\textwidth}
\includegraphics[scale=.21]{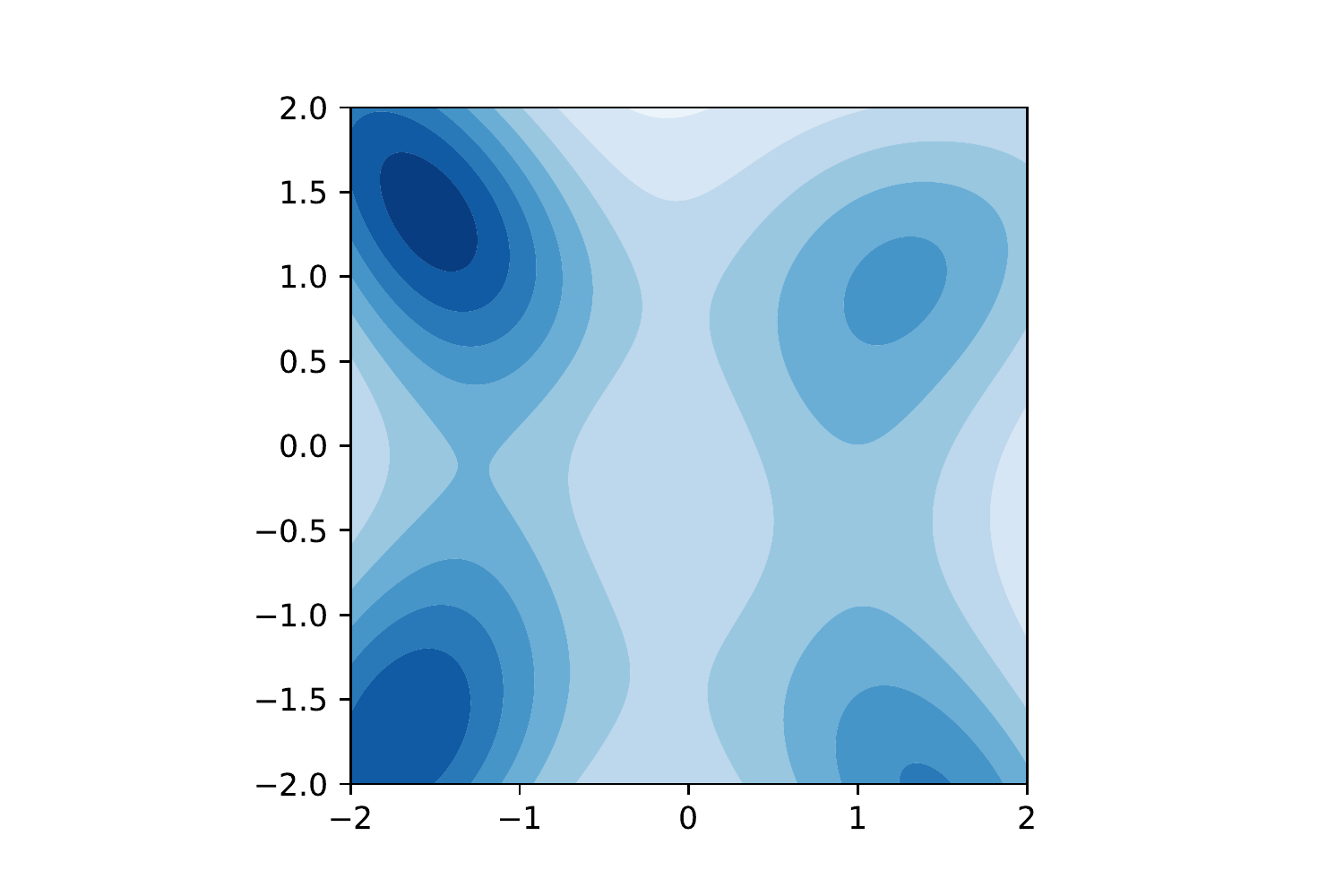}
\end{minipage}
\caption{Generator's learnt distribution for iterations $0$, $100$, $200$, $300$, $400$.}
\end{figure*}

We show a simple experiment with a mixture of $4$ bi-variate Gaussians with means: $(1.5,-1,5)$, $(1.5,1,5)$, $(-1.5,1.5)$, $(-1.5,-1.5)$. The generator and discriminator networks are both with two ReLu layers with $10$ neurons per layer. The output of the discriminator has size $1$ and the output of the generator has size $2$. The learning rate is $0.01$. We report that NOHD finds all the modes in $400$ steps and we compare these results with SGA. The results shown below are for random seed $25$.

\end{document}